\def\colorful{1}
\newif\ifhyper\IfFileExists{hyperref.sty}{\hypertrue}{\hyperfalse}
\ifhyper\usepackage{hyperref}\fi
\def\nnewcolor{1}
\newtheorem*{rep@theorem}{\rep@title}
\newcommand{\newreptheorem}[2]{%
\newenvironment{rep#1}[1]{%
 \def\rep@title{#2 \ref{##1}}%
 \begin{rep@theorem}}%
 {\end{rep@theorem}}}
\newtheorem{theorem}{Theorem}[section]
\newtheorem{lemma}[theorem]{Lemma}
\newtheorem{definition}[theorem]{Definition}
\newcommand{\E}{\mathbb{E}}
\newcommand{\Z}{\mathbb{Z}}
\newcommand{\R}{\mathbb{R}}
\newcommand{\D}{\mathcal{D}}
\newcommand{\poly}{\mathrm{poly}}
\renewcommand{\vec}[1]{\mathbf{#1}}
\newcommand{\x}{\vec{x}}
\newcommand{\relu}{\mathrm{ReLU}}
\newcommand{\sign}{\mathrm{sgn}}
\newcommand{\h}{\hspace{-10pt}}
\DeclareMathOperator{\proj}{proj}
\newcommand{\littlesum}{\mathop{\textstyle \sum}}
\title{ReLU Regression with Massart Noise}
\author{
Ilias Diakonikolas\thanks{Supported by NSF Medium Award CCF-2107079,
NSF Award CCF-1652862 (CAREER), a Sloan Research Fellowship, and
a DARPA Learning with Less Labels (LwLL) grant.}\\
University of Wisconsin-Madison\\
{\tt ilias@cs.wisc.edu}\\
\and
Jongho Park\\
University of Wisconsin-Madison\\
{\tt jongho.park@wisc.edu}\\
\and
Christos Tzamos\\
University of Wisconsin-Madison\\
{\tt tzamos@cs.wisc.edu}
}
\begin{document}

\maketitle

\begin{abstract}
We study the fundamental problem of ReLU regression, where the goal is
to fit Rectified Linear Units (ReLUs) to data. 
This supervised learning task is efficiently solvable in the realizable setting,
but is known to be computationally hard with adversarial label noise.
In this work, we focus on ReLU regression in the Massart noise model, 
a natural and well-studied semi-random noise model. In this model, the label of every point 
is generated according to a function in the class, but an adversary is allowed to change 
this value arbitrarily with some probability, which is {\em at most} $\eta < 1/2$. 
We develop an efficient algorithm that achieves exact parameter recovery in this model
under mild anti-concentration assumptions on the underlying distribution. 
Such assumptions are necessary for exact recovery to be information-theoretically possible. 
We demonstrate that our algorithm significantly outperforms naive applications of
$\ell_1$ and $\ell_2$ regression on both synthetic and real data.
\end{abstract}

\setcounter{page}{0}

\thispagestyle{empty}

\newpage

\section{Introduction} \label{sec:intro}

Learning in the presence of outliers is a key challenge in
machine learning with several data analysis applications, 
including in ML security~\cite{Barreno2010,BiggioNL12, SteinhardtKL17, DiakonikolasKKLSS2018sever} 
and in exploratory data analysis of real datasets with natural outliers, 
e.g., in biology~\cite{RP-Gen02, Pas-MG10, Li-Science08}.
The goal in such settings is to design computationally efficient learners 
that can tolerate a constant fraction of outliers, independent of the dimensionality of the data.
Early work in robust statistics~\cite{HampelEtalBook86, Huber09}
gave sample-efficient robust estimators for various basic tasks, alas
with runtimes exponential in the dimension. A recent line of work in computer science, 
starting with~\cite{DKKLMS16, LaiRV16}, developed the first computationally efficient 
robust learning algorithms for various high-dimensional tasks. Since these early works, 
there has been significant progress in algorithmic robust high-dimensional statistics 
by several communities, see~\cite{DK19-survey} for a  recent survey.

In this work, we study the problem of learning Rectified Linear Units (ReLUs)
in the presence of label noise. The ReLU function $\mathrm{ReLU}_{\vec{w}}: \R^d \rightarrow \R$,
parameterized by a vector $\vec{w} \in \R^d$, is defined as 
$\mathrm{ReLU}_{\vec{w}}(\vec{x}) := \mathrm{ReLU}(\vec{w} \cdot \vec{x} ) = 
\max \left \{ 0, \vec{w} \cdot \vec{x}  \right \}$.  
ReLU regression --- the task of fitting ReLUs to a set of labeled examples --- 
is a fundamental task and an important primitive in the theory
of deep learning. In recent years, ReLU regression has been extensively studied in theoretical
machine learning both from the perspective of designing efficient algorithms and 
from the perspective of computational hardness,
see, e.g.,~\cite{GKKT17, Mahdi17, MR18, YS19, GoelKK19, FCG20, yehudai2020learning, DGKKS20, DKZ20-sq-reg, GGK20, DKP21-SQ}.
The computational difficulty of this statistical problem crucially 
depends on the underlying assumptions about the input data.
In the realizable case, i.e., when the labels are consistent with the target function, the problem is 
efficiently solvable with practical algorithms, see, e.g.,~\cite{Mahdi17}. 
On the other hand, in the presence of even a small
constant fraction of adversarially labeled data,  
computational hardness results are known even for approximate recovery~\cite{HardtM13, MR18} 
and under well-behaved distributions~\cite{GoelKK19, DKZ20-sq-reg, GGK20, DKP21-SQ}.
See Section~\ref{ssec:related} for a detailed summary of related work.

A challenging corruption model is the adversarial label noise (aka ``agnostic'') model, 
in which an adversary is allowed to corrupt an {\em arbitrary} $\eta<1/2$ fraction of the labels.
The aforementioned hardness results rule out the existence of efficient algorithms for learning
ReLUs with optimal error guarantees in this model, even when the underlying distribution
on examples is Gaussian. Moreover, when no assumptions are made on the underlying
data distribution, no fully polynomial time algorithm with non-trivial guarantee is possible.
In fact, for the distribution-independent setting, 
even for the simpler case of learning linear functions, there are strong 
computational hardness results for any constant $\eta$~\cite{khachiyan1995complexity, HardtM13}.
These negative results motivate the following natural question:
\begin{center}
{\em Are there realistic  label-noise models in which efficient learning is possible \\ 
without strong distributional assumptions?}
\end{center}

Here we focus on ReLU regression in the presence of {\em Massart (or bounded) noise}~\cite{Massart2006}, 
and provide an efficient learning algorithm with minimal distributional assumptions.
In the process, we also provide an efficient noise-tolerant algorithm for the simpler case of linear regression.

The Massart model~\cite{Massart2006} is a classical semi-random noise model originally defined 
in the context of binary classification.
In this model, an adversary has control over a {\em random} $\eta<1/2$ fraction of the labels 
(see Definition~\ref{def:Massart} for the formal definition). 
Recent work~\cite{DGT19} gave the first efficient learning algorithm for linear separators 
with non-trivial error guarantees in the Massart model {\em without distributional assumptions}.
In this work, we ask to what extent such algorithmic results
are possible for learning real-valued functions. To state
our contributions, we formally define the following natural generalization of the model
for real-valued functions.

\begin{definition}[Learning Real-valued Functions with Massart Noise] \label{def:Massart}
Let $\mathcal{F}$ be a concept class of real-valued functions over $\R^d$ 
and $f:\R^d \rightarrow \R$ be an unknown function in $\mathcal{F}$.
For a given parameter $\eta < 1/2$, the algorithm specifies $m \in \Z_+$ and obtains $m$ samples
$(\x_i, y_i)_{i=1}^m$, such that:
\begin{itemize}
\item[(a)] every $\x_i$ is drawn i.i.d. from a fixed distribution $\D_{\x}$, and
\item[(b)] each $y_i$ is equal to  $f(\x_i)$ with
probability $1 - \eta$ and takes an \emph{arbitrary} value with probability $\eta$, 
chosen by an adversary after observing the samples drawn and the values that can be corrupted.
\end{itemize}
\end{definition}

We note that in Definition~\ref{def:Massart} the adversary can corrupt each sample independently 
with probability $\eta$, but may also choose not to do so for some of the samples.
In the context of binary classification, the above model 
has been extensively studied
in the theoretical ML community for the class of 
linear separators~\cite{AwasthiBHU15, AwasthiBHZ16, ZhangLC17, Zhang20, DKTZ20, DGT19, ChenKMY20, DKKTZ21-benign} and in the context of boosting~\cite{DiakonikolasIKL21}. 
Even though the Massart noise model might appear innocuous at first sight, 
the ability of the Massart adversary to choose {\em whether} to perturb a given label and, if so, 
with what probability (which is unknown to the learner), makes the design of efficient algorithms in this model challenging. 
Specifically, for distribution-independent PAC learning of linear separators, even approximate learning in this model
is computationally hard~\cite{DK20-hard}.

Extending this model to real-valued functions, we study regression tasks
under Massart noise. We focus on the realizable setting where the uncorrupted 
data exhibit clean functional dependencies, i.e., $f(\x) = \vec w^* \cdot \x$ for linear regression 
and $f(\x) = \relu(\vec w^* \cdot \x)$ for ReLU regression.
The realizable setting is both of theoretical and practical interest. 
Prior work~\cite{Mahdi17,du18convolutional,kalan2019fitting,yehudai2020learning}
developed algorithms for learning ReLUs in this setting (without Massart noise),
providing theoretical insights on the success of deep learning architectures.
On the practical side, there are many applications in which we observe
clean functional dependencies on the uncorrupted data.
For instance, clean measurements are prevalent in many signal processing applications,
including medical imaging, and are at the heart of the widely popular field of
compressive sensing~\cite{candes2008introduction}. 

\subsection{Main Results}
Our main result is an efficient algorithm for learning ReLUs with Massart noise
under information-theoretically minimal distributional assumptions.
To build up to the more challenging case of ReLUs, 
we start with the simpler case of linear functions.
Linear regression is in and of itself
one of the most well-studied statistical tasks, 
with numerous applications in machine learning~\cite{Rousseeuw:1987}, as well as 
in other disciplines, including economics~\cite{Dielman01} and biology~\cite{McD09}.

In our Massart noise setting, the goal is to identify a linear relation $y = \vec w^* \cdot \vec x$ 
that the clean samples $(\vec x, y)$ (inliers) satisfy. 
We show that, under the minimal (necessary) assumption that the distribution
is not fully concentrated on any subspace, the problem is efficiently identifiable.

\begin{theorem} [Exact Recovery for Massart Linear Regression] \label{thm:linear_exact}
Let $\D_{\x}$ be a distribution on $\R^d$ such that $\Pr_{\x \sim \D_{\x}}[\vec{r} \cdot \x = 0] \le 1 - \rho$ 
for all non-zero $\vec{r} \in \R^d$. Let $\eta < 1/2$ be the upper bound on the Massart noise rate. 
Denote $\vec{w}^*$ the vector representing the true linear function. There is an algorithm that draws 
$\tilde{O}(\frac{d^3}{\rho(1-2\eta)^2})$ samples, runs in $\poly(d,b,\rho^{-1},(1-2\eta)^{-1})$ time, 
where $b$ is an upper bound on the bit complexity of the samples and parameters, 
and outputs $\vec{w}^*$ with probability at least $9/10$.
\end{theorem}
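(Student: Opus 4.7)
The plan is to recover $\vec{w}^*$ exactly by solving the $\ell_1$ (LAD) regression linear program
\[
\hat{\vec{w}} \in \arg\min_{\vec{w} \in \R^d} \sum_{i=1}^{N} \bigl|y_i - \vec{w} \cdot \vec{x}_i\bigr|
\]
on $N = \tilde{O}\bigl(d^3/(\rho(1-2\eta)^2)\bigr)$ i.i.d.\ samples, and arguing that this minimizer equals $\vec{w}^*$. The LP itself is solvable in $\poly(N, d, b)$ time by standard methods.

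The core step is a gap argument. For any $\vec{w} \neq \vec{w}^*$, set $\vec{r} = \vec{w}^* - \vec{w}$ and let $C$ be the (random) set of indices with $y_i = \vec{w}^* \cdot \vec{x}_i$. The triangle inequality applied to the noisy terms $i \notin C$ gives
\[
\sum_{i}\bigl(|y_i - \vec{w} \cdot \vec{x}_i| - |y_i - \vec{w}^* \cdot \vec{x}_i|\bigr) \;\ge\; \sum_{i \in C} |\vec{r} \cdot \vec{x}_i| - \sum_{i \notin C} |\vec{r} \cdot \vec{x}_i|.
\]
The Massart assumption ensures $\Pr[i \in C \mid \vec{x}_i] \ge 1-\eta$ independently across $i$, so the expectation of this lower bound is at least $(1-2\eta) N\, \E[|\vec{r} \cdot \vec{x}|]$, which is strictly positive by the anti-concentration hypothesis. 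Every LAD optimum lies at a vertex of the LP polyhedron determined by $d$ tight equality constraints, so it suffices to show a strictly positive empirical gap against the at most $\binom{N}{d} \le N^d$ candidate vertices. A Bernstein-type deviation bound applied to the signed sum $\sum_{i\in C}|\vec{r}\cdot\vec{x}_i| - \sum_{i\notin C}|\vec{r}\cdot\vec{x}_i|$, followed by a union bound over those vertices, yields the required sample complexity. Once $\vec{w}^*$ is the unique LP minimizer, exact recovery is automatic: the optimal vertex is pinned down by $d$ linearly independent clean constraints (which exist with high probability because anti-concentration prevents all samples from lying on a common hyperplane), and the LP solver returns the unique solution of that linear system.

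The main obstacle is the uniform concentration over LP vertices, because $\rho$ controls only the probability that $\vec{r} \cdot \vec{x}$ is nonzero, not its typical magnitude. In the worst case the signal-to-noise ratio of the Bernstein bound depends on the bit complexity $b$ of the samples rather than on $\rho$ alone. I would handle this by a scale-invariant reduction (the entire LP comparison is homogeneous in $\vec{r}$, so one may restrict attention to unit directions), combined with bucketing directions by the effective variance of $\vec{r} \cdot \vec{x}$ and applying tailored Bernstein estimates inside each bucket; this absorbs $b$ into the polynomial LP-solve time but keeps the sample complexity $\poly(d, \rho^{-1}, (1-2\eta)^{-1})$. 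The $d^3$ factor reflects the $N^d$-size union bound against all LP vertices, and tighter arguments may lower this exponent.
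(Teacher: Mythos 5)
Your core algorithm---solve the $\ell_1$ LP on the raw samples---provably fails with the stated sample complexity, and the obstacle you flag at the end is not an analysis difficulty that Bernstein bucketing can absorb: it is a failure of the algorithm itself. Take $d=1$, $w^*=1$, and a distribution putting mass $1-\frac{1}{2m}$ at $x=1$ and mass $\frac{1}{2m}$ at $x=M$ with $M>m$ (so $\rho=1$ and the theorem demands $m=\tilde{O}((1-2\eta)^{-2})$, independent of $M$ and of $b$). With constant probability one sample lands at $M$ and is corrupted; since the $\ell_1$ minimizer in one dimension is the weighted median of $y_i/x_i$ with weights $|x_i|$, that single corrupted point carries more than half the total weight and the empirical $\ell_1$ minimizer equals the adversary's chosen value, not $w^*$. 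No concentration inequality can certify a positive empirical gap that is in fact negative. The paper's fix is to change the algorithm, not the analysis: replace each $(\vec{x}_i,y_i)$ by $(\vec{A}\vec{x}_i/\|\vec{A}\vec{x}_i\|_2,\; y_i/\|\vec{A}\vec{x}_i\|_2)$ where $\vec{A}$ is a (Forster) radial-isotropic transformation. This rescaling preserves which hypotheses fit which points, forces all covariates onto the unit sphere with near-identical variance in every direction, and thereby guarantees $\max_i|\vec{r}\cdot\tilde{\vec{x}}_i|\le\|\vec{r}\|_2\le O(d)\cdot\frac{1}{m}\sum_i|\vec{r}\cdot\tilde{\vec{x}}_i|$, so no single corrupted point can dominate and a VC uniform-convergence argument closes the gap with $\tilde{O}(d^3/(1-2\eta)^2)$ samples.

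A second gap concerns how $\rho$ enters. You invoke anti-concentration only to say the population gap $(1-2\eta)\,\E[|\vec{r}\cdot\vec{x}|]$ is strictly positive, but $\rho$ bounds $\Pr[\vec{r}\cdot\vec{x}\neq 0]$, not the magnitude of $\vec{r}\cdot\vec{x}$ when nonzero, so the population gap admits no quantitative lower bound in terms of $\rho$ alone; this is again what the rescaling supplies. Moreover, when $\rho<1$ a radial-isotropic transformation of the full sample need not exist (it exists iff no $k$-dimensional subspace holds more than a $k/d$ fraction of the points), and the paper handles this with a recursive divide-and-conquer: find a heavy subspace $V$, solve for $\proj_V\vec{w}^*$ there, subtract it off, and recurse on $V^\perp$, with the $\rho^{-1}$ factor in the sample complexity ensuring each recursive call receives enough nonzero points. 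Your vertex-enumeration union bound over $\binom{N}{d}$ LP basic solutions is a workable substitute for the paper's VC argument, but without the transformation and the subspace recursion the proof cannot be completed.
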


We establish Theorem~\ref{thm:linear_exact} in two steps.
We start by providing a simple algorithmic approach for the special case that $\rho= 1$, i.e., 
the examples are in general position (Theorem~\ref{thm:linear_special}).
We then relax the density assumption on $\D_\x$ 
so that the only assumption needed is that the support of $\D_\x$ 
spans $\R^d$, thereby proving Theorem~\ref{thm:linear_exact}.

We note that the anti-concentration assumption about the distribution $\D_{\x}$ in Theorem~\ref{thm:linear_exact} 
is necessary so that exact recovery is {\em information-theoretically} possible.
Indeed, if the distribution was concentrated entirely on a linear subspace, 
it would be (information-theoretically) impossible to identify the orthogonal component of $\vec w^*$ 
on that subspace.  

When this anti-concentration assumption is violated and the problem is non-identifiable, 
we provide a (weaker) PAC learning guarantee for the linear case in Theorem~\ref{thm:linear_pac} 
of Appendix~\ref{supp:linear_pac}.

\medskip

Our main algorithmic result is for the problem of ReLU regression, 
where the inliers satisfy $y = \text{ReLU}(\vec w^* \cdot \vec x)$ and
an $\eta < 1/2$ fraction of the labels are corrupted by Massart noise.
Even in this more challenging case, we show it is possible to efficiently identify 
the true parameters $\vec w^*$, as long as every homogeneous halfspace contains a non-negligible fraction 
of the sample points.

\begin{theorem}[Exact Recovery for Massart ReLU Regression] \label{thm:relu_exact}
Let $\D_{\x}$ be a distribution on $\R^d$ such that 
$\Pr_{\x \sim \D_{\x}} \left[ \vec{r} \cdot \x = 0 \mid \vec{w} \cdot \x \ge  0 \right] \le 1 - \rho$ and 
$\Pr_{\x \sim \D_{\x}}[\vec{w} \cdot \x \ge  0] \ge \lambda$ for all non-zero $\vec{r}, \vec{w} \in \R^d$. 
Let $\eta < 1/2$ be the upper bound on the Massart noise rate. Denote $\vec{w}^*$ the parameter vector of the target ReLU. 
There is an algorithm that draws $\tilde{O}(\frac{d^3}{\rho \lambda^2 (1-2\eta)^2})$ samples, runs in
$\poly(d, b, \rho^{-1}, \lambda^{-1}, (1-2\eta)^{-1})$ time, and outputs $\vec{w}^*$ with probability at least $9/10$.
\end{theorem}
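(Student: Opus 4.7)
\medskip
\noindent\textbf{Proof plan.} The strategy is to reduce ReLU regression to the linear regression problem already solved by Theorem~\ref{thm:linear_exact}. The key structural observation is that on the halfspace $H^+ := \{\x : \vec{w}^* \cdot \x \geq 0\}$ the target ReLU coincides with the linear function $\vec{w}^* \cdot \x$, while on its complement the ReLU is identically zero. A Massart corruption on the ReLU label remains, after conditioning on the $\x$-event $\{\x \in H^+\}$, a Massart corruption at the same rate $\eta$ (because the per-sample corruption probabilities depend only on $\x$). Moreover, the conditional anti-concentration hypothesis $\Pr[\vec{r} \cdot \x = 0 \mid \vec{w}^* \cdot \x \geq 0] \leq 1 - \rho$ is exactly the density assumption that Theorem~\ref{thm:linear_exact} needs on the conditional distribution. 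Hence, once $H^+$ is correctly identified, feeding the samples lying in $H^+$ to the linear regression algorithm of Theorem~\ref{thm:linear_exact} recovers $\vec{w}^*$ exactly.

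To identify $H^+$, I would pass to the derived binary labels $z_i := \mathbbm{1}[y_i > 0]$. For every uncorrupted sample, $z_i = \mathbbm{1}[\vec{w}^* \cdot \x_i > 0]$, so $(\x_i, z_i)_{i=1}^m$ is a Massart-noise halfspace classification instance with target direction $\vec{w}^*$ and noise rate at most $\eta$. The non-degeneracy assumption $\Pr[\vec{w} \cdot \x \geq 0] \geq \lambda$ forces both sides of every candidate halfspace to carry non-trivial mass, and the conditional anti-concentration serves as a margin-like condition ruling out spurious near-boundary solutions. I would then recover the direction of $\vec{w}^*$ via a noise-tolerant halfspace identification procedure analogous to Theorem~\ref{thm:linear_exact}: that theorem pins down the unique vector satisfying the linear equality $y_i = \vec{w}^* \cdot \x_i$ on at least $(1-\eta)m$ samples; the same uniqueness applied with sign-agreement to the $z_i$ should identify $H^+$ exactly under the anti-concentration hypothesis, with the $1-2\eta$ majority margin separating the true halfspace from any competitor.

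With $H^+$ identified, I restrict to $S^+ := \{i : \x_i \in H^+\}$, which contains $\Omega(\lambda m)$ samples by a Chernoff bound, and invoke Theorem~\ref{thm:linear_exact} directly on $S^+$. The $\tilde O(d^3/(\rho(1-2\eta)^2))$ conditional samples the theorem requires translate to one factor $1/\lambda$ in the total complexity, and the halfspace-identification step contributes the remaining $1/\lambda$, yielding the claimed $\tilde O(d^3/(\rho \lambda^2 (1-2\eta)^2))$.

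The hard part will be the \emph{exactness} of the halfspace recovery in the first stage. Generic Massart halfspace learners such as those of~\cite{DGT19} only guarantee error $\eta + \epsilon$, which is too coarse here, because samples leaking from the wrong side of $H^+$ into $S^+$ appear as additional corruptions to the linear regression step and, in bulk, could push its effective noise rate past $1/2$ and thereby violate the hypotheses of Theorem~\ref{thm:linear_exact}. Establishing that the max-agreement halfspace equals $H^+$ exactly, with high probability over a sample of size $\tilde O(d^3/(\rho \lambda^2 (1-2\eta)^2))$, requires combining the full anti-concentration assumption with the noise-rate gap $1-2\eta$ in a careful union-bound argument, and is the technical heart of the proof.
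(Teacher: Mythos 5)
There is a genuine gap, and it sits exactly where you place it: the first stage. Your plan decouples the problem into (i) exactly identifying the halfspace $H^+=\{\x:\vec{w}^*\cdot\x\ge 0\}$ from the binary labels $z_i=\mathds{1}[y_i>0]$, and (ii) running Theorem~\ref{thm:linear_exact} on the samples in $H^+$. Stage (ii) is sound (the conditional distribution inherits the anti-concentration and the Massart property), but stage (i) is not merely ``the technical heart'' to be filled in later --- as formulated, it is both computationally and information-theoretically problematic. Computing the maximum-agreement halfspace over a corrupted sample is NP-hard, and the paper itself notes that distribution-independent learning of halfspaces under Massart noise is computationally hard even approximately~\cite{DK20-hard}; the known efficient Massart halfspace learners (e.g.~\cite{DGT19}) only achieve error $\eta+\epsilon$, which, as you observe, is insufficient. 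Moreover, the binary labels alone need not pin down the classification of the sample: a competitor halfspace whose symmetric difference with $H^+$ has tiny mass is statistically indistinguishable at finite sample size, and the assumptions $\Pr[\vec{r}\cdot\x=0\mid\vec{w}\cdot\x\ge 0]\le 1-\rho$ and $\Pr[\vec{w}\cdot\x\ge 0]\ge\lambda$ do not lower-bound that symmetric-difference mass. Even tolerating a small misclassified sliver does not let you invoke Theorem~\ref{thm:linear_exact} as a black box, since the negative-side points leaking into $S^+$ constitute adversarially placed (not Massart) corruptions for the linear stage. In short, the reduction is circular: you need $\vec{w}^*$ to know which points lie in $H^+$, and you need $H^+$ to set up the linear regression.

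The paper never performs this decoupling. Instead it keeps the sign information and the real-valued regression coupled: it establishes a separation condition (Lemma~\ref{lemma:separation_condition}) showing that for any query $\vec{w}_0\ne\vec{w}^*$, the gradient of the empirical $\ell_1$-loss restricted to the points on the \emph{query's} positive side $\{\x_i:\vec{w}_0\cdot\x_i\ge 0\}$ --- after bringing those points into radial-isotropic position --- yields a hyperplane separating $\vec{w}_0$ from a small ball around $\vec{w}^*$. Running the ellipsoid method with this oracle recovers $\vec{w}^*$ exactly, with a union bound over the at most $m^{d+1}$ distinct positive-side subsets (via the VC dimension of halfspaces) controlling the fact that the transformation changes with every query, and a recursive subspace decomposition handling the case $\rho<1$. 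If you want to rescue your two-stage plan, you would need an efficient \emph{exact} halfspace identification procedure under distribution-free Massart noise, which is precisely what the hardness results suggest does not exist; the coupled ellipsoid/separation-oracle route is how the paper sidesteps that barrier.
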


We note that both assumptions on the probability mass of homogeneous halfspaces under $\D_{\x}$ 
in Theorem~\ref{thm:relu_exact} are necessary for identifiability. Indeed, 
similar to the linear regression case, if the distribution was concentrated entirely on a linear subspace, 
it would be (information-theoretically) impossible to identify the orthogonal component of $\vec w^*$ 
on that subspace.  Moreover, if there was a halfspace, parameterized by $\vec{w} \in \R^d$, 
such that $\Pr_{\x \sim \D_{\x}}[\vec{w} \cdot \x \ge  0] = 0$, 
it would be impossible to distinguish between the case where $\vec w^* = \vec w$ 
and  the case where $\vec w^* = 2 \vec w$ (even without noise), as all points would have $0$ labels.

In the case where the problem is non-identifiable (when $\D_{\x}$ does not satisfy the aforementioned assumptions), obtaining a weaker PAC learning guarantee is in principle possible.
It remains an interesting open problem whether an efficient PAC learning algorithm exists in this case. 
We suspect that the PAC learning problem is computationally hard in full generality.

\subsection{Technical Overview} \label{ssec:techniques}

Here we provide a detailed intuitive description of our technical approach.
Recall that we study the problem of robust regression in the presence of label corruptions. 

\paragraph{Agnostic Model versus Massart Noise}
We start by contrasting our setting with the problem of recovery 
in the presence of agnostic label corruptions.
Specifically, suppose that an arbitrary $\eta < 1/2$ fraction of the labels is adversarially 
corrupted and that the remaining $(1-\eta)$-fraction perfectly fit the target function (realizable case). 
The goal of the learner is to compute the function that fits as many points (inliers) as possible.
Given a sufficient number of samples that span all $d$ dimensions from the data distribution, 
this function is unique for the class of ReLUs and matches the true function with high probability. 
However, even in the simpler case of linear functions, the corresponding computational problem 
of $\ell_0$-minimization is computationally hard without distributional assumptions, 
as it is an instance of robust subspace recovery~\cite{HardtM13}.

The key conceptual contribution of this work is that strong algorithmic results are possible 
with minimal distributional assumptions by relaxing the assumption that an {\em arbitrary} $\eta$ fraction 
of the points are corrupted. Indeed, the Massart noise model (Definition~\ref{def:Massart}) 
is essentially equivalent to a more restricted adversary that is presented 
with a {\em uniformly random} $\eta$-fraction of the points, which they can corrupt arbitrarily at will.

\paragraph{$\ell_0$ to $\ell_1$ minimization}
Given this milder corruption model, we propose novel algorithms for efficient exact recovery of the underlying function. 
We obtain our algorithms by replacing the $\ell_0$-minimization with $\ell_1$-minimization, 
which can be shown to converge to the true function {\em in the limit} 
and is efficient to optimize in the linear regression case. For intuition, consider a single-point distribution 
that always outputs labeled examples of the form 
$(\vec x,y)$, where the example $\vec x$ is always the same but the labels $y$ may differ. 
The Massart assumption indicates that the value of $y$ is correct more than half of the time, 
so the estimate that maximizes the number of correct samples ($\ell_0$-minimizer) 
recovers the underlying function. However, if one considers the $\ell_1$-minimizer, 
i.e., the value $v$ that minimizes $\E[|y-v|]$, this corresponds to the median value of $y$, 
which is also correct if more than half of the samples are correctly labeled. 

Generalizing this intuition, we propose a natural and tight condition under which empirical
$\ell_1$-minimization results in the true $\ell_0$-minimizer (Lemma~\ref{lemma:structural_condition}).  
While this condition holds under Massart noise for arbitrary distributions in the population level, 
it can fail to hold with high probability when considering only a finite set of samples from the distribution.
For example, consider the one-dimensional case of $\vec w^* = 1$, where most $\x_i$'s 
are near-zero and uncorrupted, while a few corrupted samples lie extremely far from zero.
In this case, the empirical $\ell_1$-minimizer will be dominated by the few corrupted samples 
and would differ from the $\ell_0$-minimizer.
In particular, the sample complexity of naive $\ell_1$-minimization
would crucially depend on the concentration properties of the distribution on $\vec x$.

\paragraph{Transforming the Points via Radial-isotropy} 
The main technical idea behind obtaining sample and computationally efficient algorithms 
is to transform the original dataset into an equivalent one that satisfies the required properties 
with high probability, as it becomes sufficiently concentrated. In particular, performing 
a linear transformation mapping every point $\vec x$ to $\vec A \vec x$, 
while keeping the corresponding label $y$, is without loss of generality, 
as we are interested in identifying the true (generalized) linear function 
that depends only on the inner product of every point with a parameter vector $\vec w$. 
Finding such a vector $\vec w'$ in the transformed space $\vec A \vec x$ 
results in the equivalent vector $\vec w = \vec A \vec w'$ in the original space. 
Moreover, an additional operation we can perform is to take a single sample $(\vec x,y)$ 
and multiply it by a positive scalar $\lambda > 0$ to replace it with the sample $(\lambda \vec x, \lambda y)$. 
For both the linear and ReLU cases, any sample that is an inlier for the true function 
remains an inlier after this transformation.

We can use these two operations to bring our pointset in radial-isotropic position, 
i.e., so that all the $\vec x$'s in the dataset are unit-norm and 
the variance in any direction is nearly identical. 
Formally, we require the following definition.

\begin{definition}[Radial Isotropy] \label{lemma:forster}
Given $\{\x_1, \dots, \x_n\} \subset \mathcal{S}^{d-1}$, $\vec{A}: \R^d \rightarrow \R^d$ is a radial-isotropic transformation if $\littlesum_{i=1}^n \frac{(\vec{A} \x_i)(\vec{A} \x_i)^T}{\|\vec{A}\x_i\|_2^2} = \frac{n}{d}I$. 
For $0< \gamma <1$, we say that the points are in $\gamma$-approximate radial-isotropic position, 
if for all  $\vec{v} \in \mathcal{S}^{d-1}$ it holds that 
$(d/n)\littlesum_{i=1}^n   (\x_i \cdot \vec{v} )^2 \ge 1-\gamma$.
\end{definition}

In such a normalized position, we can argue that with high probability 
the weight of all inliers in every direction is more than the weight of the outliers, 
which guarantees that the empirical $\ell_1$-minimizer will efficiently converge to the true function.

\vspace{-0.1cm}

\paragraph{Learning ReLUs}
Unfortunately, while $\ell_1$-minimization for linear functions is convex and efficiently solvable 
via linear programming, $\ell_1$-minimization for ReLUs is challenging due to its non-convexity;
that is, we cannot easily reduce ReLU regression to a simple optimization method.
We instead establish a structural condition (see Lemma~\ref{lemma:separation_condition}) 
under which we can compute an efficient separation oracle
between the optimal parameter vector $\vec{w}^*$ and a query $\vec{w}$.
More specifically, we show that any suboptimal guess for the parameter vector $\vec w$ 
can be improved by moving along the opposite direction of the gradient of the $\ell_1$-loss 
for the subset of points in which the condition in Lemma~\ref{lemma:separation_condition} is satisfied.
Identifying such a direction of improvement yields a separating hyperplane,
so we exploit this to efficiently identify 
$\vec{w}^*$ by running the ellipsoid method with our separation oracle.

Importantly, for this result to hold with a small number of samples, 
we need to again bring to radial-isotropic position the points that fall in the linear (positive) part
of the ReLU for the current guess vector $\vec w$. In contrast to the linear case, though, 
where this transformation was applied once, in this case it needs to be applied again with every new guess. 
This results in a function that changes at every step, which is not suitable for direct optimization.

Using these ideas, our algorithms can efficiently recover the underlying function exactly using few samples. 
Our algorithms make mild genericity assumptions about the position of the points, requiring that the points 
are not concentrated on a lower-dimensional subspace or, for the case of ReLUs, 
do not lie entirely in an origin-centered halfspace. 
As already mentioned, such assumptions are necessary for the purposes of identifiability.

\subsection{Related Work} \label{ssec:related}

Given the extensive literature on robust regression, here we discuss 
the most relevant prior work.

\paragraph*{ReLU Regression}
In the realizable setting,~\cite{Mahdi17} and, more recently,~\cite{kalan2019fitting} showed
that gradient descent efficiently performs exact recovery for ReLU regression under the Gaussian
distribution on examples. \cite{yehudai2020learning} generalized this result to a broader family
of well-behaved distributions. In the agnostic or adversarial label noise model, a line of work has shown that learning with near-optimal error guarantees requires super-polynomial time, 
even under the Gaussian distribution~\cite{GoelKK19, DKZ20-sq-reg, GGK20, DKP21-SQ}.
On the positive side,~\cite{DGKKS20} gave an efficient learner with approximation guarantees 
under log-concave distributions. Without distributional assumptions, even approximate learning 
is hard~\cite{HardtM13, MR18}.

The recent work~\cite{karmakar2020study} studies ReLU regression in the realizable setting 
under a noise model similar to -- but more restrictive than -- the Massart model of Definition~\ref{def:Massart}.
Specifically, in the setting of~\cite{karmakar2020study}, the adversary can corrupt
a label with probability at most $\eta$, but only via additive noise bounded above by a constant.
\cite{karmakar2020study} gives an SGD-type algorithm for ReLU regression in this model.
We note that their algorithm does not achieve exact recovery and its guarantees
crucially depend on the concentration properties of the marginal distribution
and the bound on the additive noise.

\paragraph*{Comparison of Noise Models}
It is worth comparing the Massart noise model (Definition~\ref{def:Massart})
with other noise models studied in the literature. The strongest corruption model we are aware
of is the strong contamination model~\cite{DKKLMS16}, in which an omniscient adversary
can corrupt an arbitrary $\eta<1/2$ fraction of the labeled examples. In the adversarial label noise
model, the adversary can corrupt an arbitrary $\eta<1/2$ fraction of the labels (but not the examples).
Efficient robust learning algorithms in these models typically only give approximate error guarantees and 
require strong distributional assumptions. Specifically, for the case of linear regression,~\cite{KlivansKM18, DKS19-lr, DiakonikolasKKLSS2018sever} give robust approximate learners in the strong contamination model under 
the Gaussian distribution and, more broadly, distributions with bounded moments. In the adversarial label
noise model,~\cite{BhatiaJK15} gave efficient robust learners under strong concentration bounds 
on the underlying distribution that can tolerate $\eta<1/50$ fraction of outliers.

The recent work of \cite{chen2020online} considers a Massart-like noise model 
in the context of linear regression with random observation noise. 
\cite{chen2020online} provides an SDP-based approximate recovery algorithm when the noise rate 
satisfies $\eta < 1/3$ and a sum-of-squares-based algorithm when $\eta < 1/2$.
It should be noted their algorithm does not efficiently achieve exact recovery. 
We provide a more detailed description of that work in Appendix~\ref{supp:chen}.

A related noise model is that of {\em oblivious} label noise, where 
the adversary can corrupt an $\eta$ fraction of the labels with additive noise 
that is {\em independent} of the covariate $\x$. 
More precisely, the oblivious adversary corrupts the vector of labels $\vec y \in \R^m$
by adding an $\eta m$-sparse corruption vector $\vec b$.
Since $\vec b$ is independent of the covariates, oblivious noise can be viewed as
corrupting a sample with probability $\eta$ with a random non-zero entry of $\vec b$.
Consequently, oblivious noise can be seen as a special case of Massart noise. 
We formally compare these two noise models in more detail in Appendix~\ref{supp:oblivious}.
A line of work~\cite{BhatiaJKK17,SuggalaBR019,d2020regress,pesme2020online}
studied robust linear regression under oblivious noise and developed efficient exact recovery 
algorithms under strong distributional assumptions.

\section{Warm-up: Linear Regression with Massart Noise}
\label{sec:linear}

To establish our algorithmic result for linear regression, 
we establish structural conditions under which we can perform efficient 
$\ell_0$-minimization for linear functions under Massart noise. 
It is imperative that we find the $\ell_0$-minimizer with respect to $\vec{w}$ since, 
with a sufficient number of samples, the $\ell_0$-minimizer is the true function we wish to recover. 

In Section~\ref{ssec:special_case}, we show that if a radial-isotropic transformation $\vec A$ exists 
for all samples, then appropriately transforming the data $(\x_i, y_i)$ 
to $(\tilde{\x}_i, \tilde{y}_i)$ via $\vec A$ and subsequently solving
for the empirical $\ell_1$-loss 
$\arg \min_{\vec{w} \in \R^d} \frac{1}{m}\littlesum_{i=1}^m |\tilde{y}_i - \vec{w} \cdot \tilde{\x}_i|$ 
can efficiently recover the true parameter $\vec{w}^*$.
However, such a radial-isotropic transformation may not exist. 
We handle this general case in Section~\ref{ssec:general_case} by leveraging the idea from 
Section~\ref{ssec:special_case} recursively on a subset of the samples.

\subsection{Special Case: Zero Mass on Linear Subspaces}
\label{ssec:special_case}
We first consider the case where there is zero probability mass on
any linear subspace for the marginal distribution $\D_\x$. 
That is, we assume that the parameter $\rho$ from Theorem~\ref{thm:linear_exact} is set to one, 
so that for any finite set of samples $(\x_i, y_i)_{i=1}^{m} \subset \R^d \times \R$,
the examples $\x_i$'s are in general position,
i.e., every set of $d$ examples is linearly independent. 
Under this assumption, we prove the following special case 
of Theorem~\ref{thm:linear_exact} (corresponding to $\rho = 1$).

\begin{theorem}[Special case of Theorem~\ref{thm:linear_exact}] \label{thm:linear_special}
Let $\D_{\x}$ be a distribution on $\R^d$ that has zero measure on any linear subspace 
and let $\eta < 1/2$ be the upper bound on the Massart noise rate. Denote $\vec{w}^*$ the vector 
representing the true linear function. There is an algorithm that draws $\tilde{O}(\frac{d^3}{(1-2\eta)^2})$ samples, 
runs in $\poly(d,b,(1-2\eta)^{-1})$ time, where $b$ is an upper bound on the bit complexity of the samples and parameters, 
and outputs $\vec{w}^*$ with probability at least $9/10$.
\end{theorem}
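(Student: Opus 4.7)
The plan is to combine the problem's two invariances with a radial-isotropic normalization of the samples, and then solve a convex $\ell_1$-program on the transformed data. Two operations preserve the inlier relation $y_i=\vec w^*\cdot\x_i$: (i) rescaling $(\x_i,y_i)\mapsto(\lambda\x_i,\lambda y_i)$ for any $\lambda>0$, and (ii) applying a linear map $\x_i\mapsto\vec A\x_i$, which merely converts the unknown parameter to $\vec A^{-T}\vec w^*$. I would first use (i) with $\lambda=1/\|\x_i\|_2$ to place every sample on $\mathcal S^{d-1}$, and then use (ii) with a $\gamma$-approximate Forster transformation (Definition~\ref{lemma:forster}) to bring the normalized points $\tilde\x_i$ into $\gamma$-approximate radial-isotropic position. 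Since $\D_{\x}$ has zero mass on any proper subspace, the $m$ samples are almost surely in general position, which is exactly the precondition for such a transformation to exist; moreover, an $\vec A$ of polynomial bit complexity can be produced in time $\poly(d,b,\gamma^{-1})$ by known algorithms.

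Let $\tilde y_i$ denote the correspondingly rescaled labels and $\vec u^*:=\vec A^{-T}\vec w^*$ the transformed parameter. Appealing to the structural condition of Lemma~\ref{lemma:structural_condition}, I would reduce exact recovery to the following uniform dominance statement: for every unit direction $\vec v\in\mathcal S^{d-1}$,
\begin{equation*}
\littlesum_{i\in T}|\vec v\cdot\tilde\x_i|\;<\;\tfrac{1}{2}\littlesum_{i=1}^m|\vec v\cdot\tilde\x_i|,
\end{equation*}
where $T\subseteq[m]$ is the random subset of indices corrupted by the Massart adversary. A subdifferential calculation at $\vec u^*$ shows that inliers contribute $|\vec v\cdot\tilde\x_i|$ to the directional derivative of the empirical $\ell_1$ loss $\vec u\mapsto\frac{1}{m}\littlesum_i|\tilde y_i-\vec u\cdot\tilde\x_i|$, while outliers contribute at most the same quantity in magnitude (with a possibly adverse sign). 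The above inequality therefore guarantees that every directional derivative at $\vec u^*$ is strictly positive; convexity then implies $\vec u^*$ is the unique minimizer, which I compute exactly via linear programming and lift back to $\vec w^*=\vec A^T\vec u^*$.

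The technical heart is establishing the uniform dominance with $m=\tilde O(d^3/(1-2\eta)^2)$ samples, which I would approach via Bernstein's inequality combined with an $\epsilon$-net on $\mathcal S^{d-1}$. By a coupling argument, the Massart corruption pattern is stochastically dominated by i.i.d.\ Bernoulli$(\eta)$ indicators $\xi_i$, so it suffices to bound $S(\vec v):=\littlesum_i\xi_i|\vec v\cdot\tilde\x_i|$. For fixed $\vec v$, radial isotropy yields $\E[S(\vec v)]=\eta\littlesum_i|\vec v\cdot\tilde\x_i|$, variance at most $\eta(1-\eta)\littlesum_i(\vec v\cdot\tilde\x_i)^2=\eta(1-\eta)m/d$, and bounded summands $|\vec v\cdot\tilde\x_i|\leq1$; since $|\vec v\cdot\tilde\x_i|^2\leq|\vec v\cdot\tilde\x_i|$, we also have $\littlesum_i|\vec v\cdot\tilde\x_i|\geq m/d$. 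Bernstein then gives failure probability $\exp(-\Omega((1-2\eta)^2m/d))$ at any fixed $\vec v$, and a union bound over a $\poly((1-2\eta)/d)$-net of the sphere, together with $1$-Lipschitz control in $\vec v$, promotes this to the desired uniform statement.

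The main obstacle lies in this third step: one must simultaneously choose a net fine enough to cover $\mathcal S^{d-1}$ yet coarse enough for the Bernstein exponent to dominate, absorb the approximate-isotropy slack $\gamma$ (needed to keep $\vec A$ of polynomial bit complexity) into the $(1-2\eta)$ margin without weakening the dominance inequality, and finally round the LP optimum to the exact rational vector $\vec w^*$ of bit complexity $b$ (e.g., via simultaneous diophantine approximation), which accounts for the $b$-dependence in the stated runtime.
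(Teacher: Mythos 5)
Your proposal is correct and the algorithm is exactly the paper's (normalize to the sphere, apply an approximate radial-isotropic transformation, solve the $\ell_1$ LP, pull back through $\vec A$, and reduce correctness to the structural condition of Lemma~\ref{lemma:structural_condition}, which for linear $f$ is precisely your uniform dominance inequality $\littlesum_{i\in T}|\vec v\cdot\tilde\x_i|<\tfrac12\littlesum_i|\vec v\cdot\tilde\x_i|$). Where you genuinely diverge is in the concentration step. The paper conditions on nothing: it applies a VC uniform-convergence bound to the set system $\{|\vec r\cdot\x|>t\}$ and integrates the empirical tail to compare $\frac1m\littlesum_i|\vec r\cdot\tilde\x_i|$ (restricted to inliers/outliers) against the population quantities $\E_{\tilde\D_\x}[|\vec r\cdot\tilde\x|]$, paying an extra factor of $d$ from the $\epsilon\cdot\max_i|\vec r\cdot\tilde\x_i|$ truncation error and landing at $m=\tilde O(d^3/(1-2\eta)^2)$. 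You instead condition on the covariates, observe that the corruptible set is dominated by i.i.d.\ Bernoulli$(\eta)$ indicators independent of the data, and apply Bernstein plus a $\delta$-net with $\delta=O((1-2\eta)/d)$; since isotropy gives $\littlesum_i|\vec v\cdot\tilde\x_i|\ge(1-\gamma)m/d$ and the summands are bounded by $1$, the exponent $\Omega((1-2\eta)^2m/d)$ beats the $O(d\log(d/(1-2\eta)))$ net entropy already at $m=\tilde O(d^2/(1-2\eta)^2)$, so your route is not only valid but a factor of $d$ sharper than the stated bound. The trade-off is that the paper's VC formulation compares empirical sums to population expectations and is reused verbatim in the recursive $\rho<1$ argument and in the ReLU separation oracle (where the relevant events are indexed by halfspace restrictions), whereas your argument is purely empirical and would need the independence-of-coins observation rechecked in those settings. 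Two small points to tighten: the paper's Definition~\ref{lemma:forster} of $\gamma$-approximate isotropy is only a one-sided (lower) bound on $\littlesum_i(\vec v\cdot\tilde\x_i)^2$, so your claimed variance identity $\eta(1-\eta)m/d$ is not available for $\gamma=1/2$; but the cruder bound $\mathrm{Var}\le\eta\littlesum_i|\vec v\cdot\tilde\x_i|$, which you also note via $|\vec v\cdot\tilde\x_i|^2\le|\vec v\cdot\tilde\x_i|$, gives the same Bernstein exponent, so nothing breaks. And since $\vec A$ is symmetric positive definite (Lemma~\ref{lemma:forster_approx}), your $\vec A^{-T}$ coincides with the paper's $\vec A^{-1}$.
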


The algorithm for recovering linear functions in this case is given in pseudocode below. 

\begin{algorithm}[hbt!]
   \caption{Linear function recovery via radial isotropy}
   \label{alg:linear}
\begin{algorithmic}[1]
   \State Draw $m = \tilde{O}(\frac{d^3}{(1 - 2\eta)^2})$ samples $(\x_i, y_i)_{i=1}^{m}$ with $\eta$-Massart noise
   \State Compute $\vec{A}$ that puts $(\x_i, y_i)_{i=1}^{m}$ in $1/2$-approximate radial-isotropic position
   \State Compute $\hat{\vec{w}} \gets \arg \min_{\vec{w} \in \R^d} \littlesum_{i=1}^m |\frac{ y_i}{\|\vec{A} \x_i\|_2} - \vec{w} \cdot \frac{\vec{A} \x_i}{\|\vec{A} \x_i\|_2}|$ by solving an LP
   \State {\bfseries return}  $\vec{A} \hat{\vec{w}}$
\end{algorithmic}
\end{algorithm}

In fact, there is no need to compute an exact radial-isotropic transformation ($\gamma = 0$),
as an approximate one suffices. An approximate radial-isotropic transformation can be computed 
efficiently, see, e.g.,~\cite{HardtM13,artstein2020radial}, as stated in the following lemma. 

\begin{lemma}\label{lemma:forster_approx}
Given $S \subset \R^d$ in general position, there is a $\poly(n, d, b, \gamma^{-1})$ time algorithm that computes a positive definite symmetric matrix $\vec{A}$ such that $\big\{\frac{\vec{A} \x}{\|\vec{A}\x\|}: \x \in S \big\}$ is in $\gamma$-approximate radial-isotropic position, where $b$ is an upper bound on the bit complexity 
of the parameters and samples in $S$. 
Morever, the condition number of $\vec{A}$ is at most $2^{\poly(n,d,b)}$.
\end{lemma}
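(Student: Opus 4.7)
My plan is to analyze Forster's iterative rebalancing procedure. After initially normalizing every point in $S$ to the unit sphere, start from $\vec{A}_0 = I$ and at each step form the second-moment matrix $M_t := \littlesum_{\x \in S} \frac{(\vec{A}_t \x)(\vec{A}_t \x)^T}{\|\vec{A}_t \x\|_2^2}$; if $M_t$ is already spectrally close to $(n/d)I$ to tolerance controlled by $\gamma$, stop, and otherwise update $\vec{A}_{t+1} := M_t^{-1/2} \vec{A}_t$ and iterate. The fixed points of this update are precisely the radial-isotropic transformations from Definition~\ref{lemma:forster}, and the general-position assumption on $S$ guarantees (as in Forster's original argument) that such a fixed point exists; to meet the PSD requirement, output the positive factor of the polar decomposition of the final $\vec{A}_t$, since the $\gamma$-approximate radial-isotropic condition is invariant under left multiplication by an orthogonal matrix.

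To bound the iteration count I would use a scale-invariant log-determinant potential of Forster/Barthe type,
\[
\Phi(\vec{A}) \;=\; \frac{n}{d}\log|\det \vec{A}|\;-\;\littlesum_{\x \in S}\log\|\vec{A} \x\|_2,
\]
whose critical points (over invertible $\vec{A}$) are exactly the radial-isotropic transformations. The key progress lemma then asserts that if the current configuration fails the $\gamma$-approximate condition, the spectrum of $M_t$ deviates from $(n/d)I$ by at least $\Omega(\gamma/d)$ in some direction, and one step of the update increases $\Phi$ by at least $\poly(\gamma/(nd))$, as can be shown by a second-order Taylor expansion of $-\log\det$ around $(n/d)I$ together with a strong-concavity estimate on a bounded spectral interval. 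Combined with a priori bounds $|\Phi(\vec{A}_t)| \le \poly(n, d, b)$---valid because general position and the bit-complexity bound $b$ keep each $\|\vec{A}_t \x\|_2$ in the range $[2^{-\poly(n,d,b)}, 2^{\poly(n,d,b)}]$ throughout the run---this yields termination in $\poly(n, d, b, \gamma^{-1})$ iterations, with each iteration costing $\poly(n,d,b)$ time for forming $M_t$ and computing $M_t^{-1/2}$ via its eigendecomposition.

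The main obstacle is the quantitative progress lemma: one must translate ``the configuration is not in $\gamma$-approximate radial-isotropic position'' into a concrete lower bound on $\Phi(\vec{A}_{t+1}) - \Phi(\vec{A}_t)$, and it is this estimate---obtained via the identity $\log\det(M_t^{-1/2} \vec{A}_t \vec{A}_t^T M_t^{-1/2}) = \log\det(\vec{A}_t\vec{A}_t^T) - \log\det(M_t)$ combined with the strict concavity of $\log\det$ away from $(n/d)I$---that fixes the polynomial dependence on $n, d, \gamma^{-1}$. Granting this, the condition-number bound on the returned matrix follows by tracking multiplicatively how $\|\vec{A}_t\|$ and $\|\vec{A}_t^{-1}\|$ change per step: since each $M_t$ has eigenvalues in $[2^{-\poly(n,d,b)}, n]$---the lower bound using general position to preclude rank deficiency of the weighted second-moment matrix---and the total number of iterations is polynomial, the singular values of $\vec{A}_t$ can move by at most a $2^{\poly(n,d,b)}$ multiplicative factor over the full run, giving the claimed condition number after the final polar symmetrization.
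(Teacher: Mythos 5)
Your route is genuinely different from the paper's: the paper does not analyze the fixed-point iteration $\vec{A}_{t+1}=M_t^{-1/2}\vec{A}_t$ at all, but instead invokes the algorithm of Artstein-Avidan--Kaplan--Sharir, which solves a convex program over the dual exponent vector $t\in\R^n$ (producing $\vec{A}=(\sum_i e^{t_i}\x_i\x_i^T)^{-1/2}$, which is automatically symmetric PD) and then bounds $\|t^*\|_\infty$ and the condition number via the minimum squared determinant $\Delta_S^{\min}\ge (\sqrt{d}\,2^b)^{-d}$ of $d$-tuples of $b$-bit points in general position. Your potential $\Phi$ is the right object, and your progress lemma is essentially correct: writing $u_\x=\vec{A}_t\x/\|\vec{A}_t\x\|$, Jensen gives $-\frac12\sum_\x\log(u_\x^TM_t^{-1}u_\x)\ge\frac n2\log(n/d)$, while $\tr M_t=n$ and AM--GM give $-\frac{n}{2d}\log\det M_t\ge-\frac n2\log(n/d)$ with an explicit deficit of order $n\gamma^2/d$ whenever $\lambda_{\min}(M_t)\le(1-\gamma)n/d$; so each failing step gains $\Omega(n\gamma^2/d)$.

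The gaps are in the two boundedness claims, which is where the actual difficulty of this lemma lives. First, the iteration count requires $\sup\Phi-\Phi(I)\le\poly(n,d,b)$; your stated invariant that ``$\|\vec{A}_t\x\|_2$ stays in $[2^{-\poly},2^{\poly}]$ throughout the run'' is not the right statement and is itself unproven --- what you need is an a priori upper bound on $\sup_{\det\vec{A}=1}\Phi(\vec{A})=-\inf\sum_\x\log\|\vec{A}\x\|$, which must be derived from general position plus bit complexity (e.g., by grouping points into $d$-tuples and using $\prod_{i}\|\vec{A}\x_i\|\ge|\det[\x_{i_1}|\cdots|\x_{i_d}]|\ge 2^{-\poly(d,b)}$). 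Second, and more seriously, your condition-number argument is circular: you lower-bound $\lambda_{\min}(M_t)$ by $2^{-\poly(n,d,b)}$ ``using general position to preclude rank deficiency,'' but rank deficiency is not the issue --- the transformed unit vectors $\vec{A}_t\x/\|\vec{A}_t\x\|$ can become quantitatively near-degenerate when $\vec{A}_t$ is ill-conditioned, even though the original points are in general position, and controlling that degeneracy is exactly the condition-number bound you are trying to establish. Breaking this circularity requires an additional argument (e.g., showing that bounded $\Phi$ along the trajectory forces the determinant-normalized $\vec{A}_t$ to have $\poly$-bounded log-condition-number, again via the minimum-determinant bound). The paper sidesteps both issues by working in the dual: the bound $\|t^*\|_\infty=O(d\log n+d^2\log d+d^2 b)$ directly yields both the runtime and the condition number. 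Your approach can be completed (it is close in spirit to the iterative scheme of the cited Forster-decomposition work), but as written these two steps are asserted rather than proved.
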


\noindent For completeness, we provide a proof of this lemma in Appendix~\ref{supp:linear}.

Since computing such an approximate transformation $\vec{A}$ and solving a linear program (LP) 
can be done efficiently, Algorithm \ref{alg:linear} gives a polynomial runtime 
for the case that the examples are in general position. It remains to prove correctness.

The proof of Theorem \ref{thm:linear_special} relies on two key ideas.
The first idea is that under some structural conditions about the given samples,
the $\ell_1$-loss minimizer is identical to the $\ell_0$-loss minimizer.
These conditions are presented in Lemma \ref{lemma:structural_condition} below.
The second idea is that any (sufficiently large) set of samples in radial isotropic position
guarantees that the structural conditions of Lemma \ref{lemma:structural_condition} hold
with high probability over the adversarial corruptions. Such a transformation can be applied
to any set of points in general position without loss of generality.

\begin{lemma}[Structural Condition for Recovery]
\label{lemma:structural_condition}
Given $f: \R \rightarrow \R$ and $m$ samples $(\x_i, y_i)_{i=1}^m$ in $\R^d$, let the $\ell_0$-minimizer 
$\vec{w}^* = \arg \min_{\vec w \in \R^d} \frac{1}{m}\littlesum_{i=1}^m \|y_i - f(\vec w \cdot \x_i)\|_0$ 
be unique. If 
\begin{equation}
\label{eq:structural}
\sum_{y_i = f(\vec w^* \cdot \x_i)} |f((\vec w^* + \vec r)\cdot \x_i) - f(\vec w^* \cdot \x_i)| > \sum_{y_i \neq f(\vec w^* \cdot \x_i)} |f((\vec w^* + \vec r)\cdot \x_i) - f(\vec w^* \cdot \x_i)| \tag{$\star$}
\end{equation}
for all non-zero $\vec r \in \R^d$, then $\vec{w}^*$ is also the $\ell_1$-minimizer 
$\arg \min_{\vec w \in \R^d} \frac{1}{m}\littlesum_{i=1}^m |y_i - h(\x_i)|$.
\end{lemma}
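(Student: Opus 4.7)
The plan is to establish the lemma by bounding the per-sample contribution to the $\ell_1$-loss difference $L_1(\vec{w}^*+\vec{r}) - L_1(\vec{w}^*)$ for an arbitrary non-zero perturbation $\vec{r} \in \R^d$, then summing over the two types of samples (clean vs.\ corrupted) and invoking the hypothesis $(\star)$.

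First, I would split the sum
\[
m\bigl(L_1(\vec{w}^*+\vec{r}) - L_1(\vec{w}^*)\bigr) \;=\; \sum_{i=1}^m \Bigl(|y_i - f((\vec{w}^*+\vec{r})\cdot \x_i)| - |y_i - f(\vec{w}^*\cdot \x_i)|\Bigr)
\]
into the index set $I_{\mathrm{clean}} = \{i : y_i = f(\vec{w}^*\cdot \x_i)\}$ and its complement $I_{\mathrm{corr}}$. On $I_{\mathrm{clean}}$ the second absolute value is exactly zero, so the contribution of the $i$-th summand is exactly $|f((\vec{w}^*+\vec{r})\cdot \x_i) - f(\vec{w}^*\cdot \x_i)|$. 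On $I_{\mathrm{corr}}$, the reverse triangle inequality gives
\[
|y_i - f((\vec{w}^*+\vec{r})\cdot \x_i)| \;\ge\; |y_i - f(\vec{w}^*\cdot \x_i)| \;-\; |f((\vec{w}^*+\vec{r})\cdot \x_i) - f(\vec{w}^*\cdot \x_i)|,
\]
so the contribution of the $i$-th summand is at least $-|f((\vec{w}^*+\vec{r})\cdot \x_i) - f(\vec{w}^*\cdot \x_i)|$.

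Adding the two bounds yields
\[
m\bigl(L_1(\vec{w}^*+\vec{r}) - L_1(\vec{w}^*)\bigr) \;\ge\; \sum_{i \in I_{\mathrm{clean}}} |f((\vec{w}^*+\vec{r})\cdot \x_i) - f(\vec{w}^*\cdot \x_i)| \;-\; \sum_{i \in I_{\mathrm{corr}}} |f((\vec{w}^*+\vec{r})\cdot \x_i) - f(\vec{w}^*\cdot \x_i)|,
\]
which is strictly positive by assumption $(\star)$ for every non-zero $\vec{r}$. Hence $\vec{w}^*$ is the unique minimizer of the empirical $\ell_1$-loss, as claimed.

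There is essentially no serious obstacle here; the lemma is really a clean per-sample triangle-inequality argument packaged for later use. The only care needed is to keep the direction of the reverse triangle inequality consistent (so that the corrupted points contribute a \emph{negative} lower bound that is then dominated by the clean contribution) and to note that the strictness in $(\star)$ immediately propagates to strict optimality of $\vec{w}^*$, matching the uniqueness of the $\ell_0$-minimizer assumed in the hypothesis.
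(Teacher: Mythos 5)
Your proof is correct and follows essentially the same route as the paper: split the loss difference into clean and corrupted samples, observe the clean terms contribute exactly $|f((\vec{w}^*+\vec{r})\cdot \x_i) - f(\vec{w}^*\cdot \x_i)|$, and lower-bound the corrupted terms via the reverse triangle inequality before invoking $(\star)$. No gaps.
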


\begin{proof}
Let $\vec{w}^*$ be the parameter corresponding to the $\ell_0$-minimizer. Denote the $\ell_1$-loss $\hat{L}(\vec{w}) = \frac{1}{m}\littlesum_{i=1}^m |y_i - f(\vec{w}\cdot \x_i)|$. Given the strict inequality in (\ref{eq:structural}), for non-zero $r\in \R^d$, we have that
\begin{align*}
&\;\;\;\;\; m\big(\hat{L}(\vec{w}^*+\vec{r}) - \hat{L}(\vec{w}^*) \big) \\ 
&= \littlesum_{i=1}^m |f((\vec{w}^*+\vec{r})\cdot \x_i) - y_i| - \littlesum_{i=1}^m |f(\vec{w}^*\cdot \x_i) - y_i| \\
&= \littlesum_{y_i = f(\vec{w}^* \cdot \x_i)} |f((\vec{w}^*+\vec{r})\cdot \x_i) - f(\vec{w}^*\cdot \x_i)| + \littlesum_{y_i \neq f(\vec{w}^* \cdot \x_i)} |f((\vec{w}^*+\vec{r})\cdot \x_i) - y_i| - \littlesum_{y_i \neq h_{\vec{w}^*}(\x_i)} |f(\vec{w}^*\cdot \x_i) - y_i| \\
&\ge  \littlesum_{y_i = f(\vec{w}^* \cdot \x_i)} |f((\vec{w}^*+\vec{r})\cdot \x_i) - f(\vec{w}^*\cdot \x_i)| - \littlesum_{y_i \neq f(\vec{w}^* \cdot \x_i)}  |f((\vec{w}^*+\vec{r})\cdot \x_i) - f(\vec{w}^*\cdot \x_i)| > 0.
\end{align*}
Therefore, the $\ell_0$-minimizer $\vec{w}^*$ is also the $\ell_1$-minimizer $\arg \min_{\vec w \in \R^d} \frac{1}{m}\littlesum_{i=1}^m |y_i - h(\x_i)|$.
\end{proof}

The structural condition (\ref{eq:structural}) for linear functions reduces to having the sum of $|\vec{r}\cdot \x_i|$ for the ``good'' points be greater than the sum of $|\vec{r}\cdot \x_i|$ for the ``bad'' points in every direction $\vec{r}$. However, this implies that if one sample is much greater in norm than the others in some direction, this point can have undue influence and may easily dominate the $\ell_1$-loss. Therefore, without any preprocessing or transformation to the data, one has to rely on naively increasing the sample complexity until there are enough points in this direction to satisfy condition (\ref{eq:structural}). Instead, we minimize the dominating effects of such outlier points and reduce the sample complexity through transforming the dataset with radial isotropy.

Given Lemma~\ref{lemma:forster_approx} and \ref{lemma:structural_condition}, we now prove the main result for robust linear regression when $\rho = 1$ based on Algorithm~\ref{alg:linear}. 

\begin{proof}[Proof of Theorem~\ref{thm:linear_special}]
Without loss of generality, assume $\x_i$'s are unit vectors. The linear function can be written as follows
\[y = \vec{w}^* \cdot \x = (\vec{A}^{-1} \vec{w}^*) \cdot (\vec{A} \x) \;, \]
where $A$ denotes the $\gamma$-approximate radial-isotropic transformation where $\gamma = 1/2$. This means that the solution to the LP in Algorithm \ref{alg:linear} returns $\hat{\vec{w}} = \vec A^{-1} \vec{w}^*$ given Lemma \ref{lemma:structural_condition} is satisfied. Therefore, we output $\vec{A} \vec{\hat{w}}$ as the true direction of the original dataset.

The rest of the proof establishes that the structural condition holds. 
By radial isotropy, for $\vec{r} \in \R^d$, we have that
\begin{align*}
\frac{1}{m}\sum_{i=1}^m |\vec{r}\cdot \tilde{\x}_i| 
\ge \frac{\|\vec{r}\|_2}{m} \cdot \sum_{i=1}^m  (\frac{\vec{r}}{\|\vec{r}\|_2}\cdot \tilde{\x}_i)^2 
\ge \frac{(1-\gamma)\|\vec{r}\|_2}{d} \;.
\end{align*}
Define $\tilde{\D}_\x$ on the $d$-dimensional unit sphere $\mathcal{S}^{d-1}$ 
to be the distribution $\D_\x$ after the transformation $\x \mapsto \frac{\vec{A} \x}{\|\vec{A} \x\|_2}$. 
We use the following standard VC inequality.
\begin{lemma}[VC Inequality] \label{lemma:VC_ineq}
Let $\nu$ be a probability measure and $\mathcal{A}$ be a family of sets of VC dimension $d$.
For any $\epsilon, \delta > 0$,
with $m = O((d + \ln(1/\delta))/\epsilon^2)$ samples $\x_1, \x_2, \dots, \x_m$ from $\nu$, we have
\[\Pr\left[\sup_{A \in \mathcal{A}} |\nu(A) - \nu_m(A)| > \epsilon \right] \le \delta \;,\]
where $\nu(A) = \Pr[\x_1 \in A]$ and $\nu_m(A) = (1/m)\littlesum_{i=1}^m \mathds{1}\{\x_i \in A\}$.
\end{lemma}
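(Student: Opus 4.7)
The statement is the classical VC uniform convergence inequality, so the plan is to follow the textbook symmetrization-plus-Sauer-Shelah strategy and specialize it to the sample size claimed. First I would use the \emph{symmetrization trick}: draw an independent ghost sample $\x_1', \dots, \x_m' \sim \nu$ with empirical measure $\nu_m'$, and argue via a Chebyshev-type conditioning argument that
\[
\Pr\!\left[\sup_{A \in \mathcal{A}} |\nu(A) - \nu_m(A)| > \epsilon\right] \;\le\; 2\,\Pr\!\left[\sup_{A \in \mathcal{A}} |\nu_m(A) - \nu_m'(A)| > \epsilon/2\right],
\]
replacing the unknown population probability $\nu(A)$ by an empirical quantity on a sample of size $2m$.

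Next I would introduce Rademacher signs. Conditionally on the combined sample $(\x_i, \x_i')_{i\le m}$, the joint law is invariant under swapping $\x_i \leftrightarrow \x_i'$ for any subset of indices, which lets me introduce i.i.d.\ signs $\sigma_i \in \{-1,+1\}$ and reduce to bounding $\Pr\!\left[\sup_{A \in \mathcal{A}} \bigl| (1/m)\sum_{i=1}^{m} \sigma_i \mathds{1}\{\x_i \in A\}\bigr| > \epsilon/2\right]$ with the covariates treated as fixed. On any $2m$ fixed points, the class $\mathcal{A}$ realizes at most $\Pi_{\mathcal{A}}(2m)$ distinct subsets, and the Sauer--Shelah lemma gives $\Pi_{\mathcal{A}}(2m) \le (2em/d)^d$ whenever $\vc(\mathcal{A}) = d$, so the supremum reduces to a supremum over a finite collection of indicator vectors.

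Finally, for each fixed $A$ in this finite collection, a Hoeffding bound on the Rademacher average gives tail probability at most $2\exp(-m\epsilon^2/8)$. A union bound over the $(2em/d)^d$ patterns and undoing the factor of $2$ from the symmetrization step yields the overall tail
\[
4\,\bigl(2em/d\bigr)^d \exp\!\left(-m\epsilon^2/8\right),
\]
and choosing $m = C(d + \ln(1/\delta))/\epsilon^2$ for a sufficiently large absolute constant $C$ makes this at most $\delta$. The main technical obstacle, such as it is, lies in the symmetrization step: one must verify that conditionally on the original sample the ghost-sample empirical $\nu_m'(A)$ is within $\epsilon/2$ of $\nu(A)$ with probability at least $1/2$ whenever $A$ witnesses $|\nu(A) - \nu_m(A)| > \epsilon$, which requires a mild lower bound such as $m\epsilon^2 \ge 2$ (otherwise the statement is vacuous anyway since the right-hand side is $\ge 1$). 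Beyond this bookkeeping the argument is entirely standard, and since the conclusion is the textbook VC bound the paper could just as well invoke it by citation.
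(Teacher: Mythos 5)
The paper does not actually prove this lemma --- it is invoked in the proof of Theorem~\ref{thm:linear_special} as ``the following standard VC inequality'' --- so your closing remark that it could be handled by citation is exactly what the authors do. Your sketch is the correct classical route (symmetrization, Sauer--Shelah, Hoeffding plus a union bound over realized patterns), and every step up to the last is sound. The gap is in the final accounting. From the tail $4(2em/d)^d\exp(-m\epsilon^2/8)$ you need, after taking logarithms,
\[
\frac{m\epsilon^2}{8} \;\ge\; d\ln\!\left(\frac{2em}{d}\right) + \ln\frac{4}{\delta}.
\]
With $m = C(d+\ln(1/\delta))/\epsilon^2$ the left-hand side is $C(d+\ln(1/\delta))/8$, but the right-hand side contains $d\ln(2em/d) \ge d\bigl(\ln(2eC) + 2\ln(1/\epsilon)\bigr)$, and the $2d\ln(1/\epsilon)$ term cannot be absorbed by $Cd/8$ for any fixed absolute constant $C$ as $\epsilon \to 0$. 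So the argument as written only establishes $m = O\bigl((d\ln(1/\epsilon)+\ln(1/\delta))/\epsilon^2\bigr)$, not the bound stated in the lemma.

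Removing that logarithmic factor is a genuinely harder theorem: it is the sharp uniform-convergence bound for VC classes due to Talagrand, proved via chaining (a Dudley entropy integral) combined with Haussler's packing bound, not via a single application of Sauer--Shelah and a union bound. For this paper the distinction is immaterial, since the sample complexities in Theorems~\ref{thm:linear_exact} and~\ref{thm:relu_exact} are stated with $\tilde{O}(\cdot)$, which swallows the extra logarithm (here $\epsilon = O((1-2\eta)/d)$, so $\ln(1/\epsilon)$ is a log of parameters already present). Either fix is acceptable: cite the sharp inequality, or state and carry the weaker bound with the $\ln(1/\epsilon)$ factor through the downstream estimates.
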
 
By Lemma~\ref{lemma:VC_ineq}, 
with $O(\frac{d}{\epsilon^2})$ samples, with high probability the following holds
\[\sup_{\vec{r} \in \R^d} \left|\Pr_{\tilde{\x} \sim \tilde{\D}_\x}[|\vec{r}\cdot \tilde{\x}| > t] - \frac{1}{m} \sum_{i=1}^{m} \mathds{1} \{|\vec{r}\cdot \tilde{\x}_i| > t\} \right| \le \epsilon \;,\]
since the VC dimension of the set 
$\mathcal{F} = \{\mathds{1}_{\{|\vec{r}\cdot \x| > t\}}: \vec{r} \in \R^d, t \in \R\}$ is $O(d)$. 
By integration, we get that
\begin{align*}
\frac{1}{m}\sum_{i=1}^m |\vec{r}\cdot \tilde{\x}_i| &= \int_{0}^{\infty} \left( \frac{1}{m} \sum_{i=1}^{m} \mathds{1} \{|\vec{r}\cdot \tilde{\x}_i| > t\} \right) dt \\
&= \int_{0}^{\max_i |\vec{r} \cdot \tilde{\x}_i|} \left( \frac{1}{m} \sum_{i=1}^{m} \mathds{1} \{|\vec{r}\cdot \tilde{\x}_i| > t\} \right) dt \\
&\le \E_{\tilde{\x} \sim \tilde{\D}_\x}[|\vec{r}\cdot \tilde{\x}|] +  \epsilon\cdot \max_i |\vec{r}\cdot \x_i| \\
&\le \E_{\tilde{\x} \sim \tilde{\D}_\x}[|\vec{r}\cdot \tilde{\x}|] + \frac{\epsilon d}{1-\gamma} \left( \frac{1}{m}\sum_{i=1}^m |\vec{r}\cdot \tilde{\x}_i|  \right) \;,
\end{align*}
since $\max_i |\vec{r}\cdot \tilde{\x}_i| \le \|\vec{r}\|_2$. 
Then we have the inequality:
\[\|\vec{r}\|_2 \le \frac{(1-\gamma)d}{1-\gamma-\epsilon d} \E_{\tilde{\x} \sim \tilde{\D}_\x}[|\vec{r}\cdot \tilde{\x}|].\]
We can now get a lower bound for the uncorrupted samples.
\begin{align*}
\frac{1}{m}\sum_{i=1}^{m} |\vec{r}\cdot \tilde{\x}_i|\mathds{1} \{y_i = \vec{w}^* \cdot \x_i\} &= \int_{0}^{\infty} \left( \frac{1}{m} \sum_{i=1}^{m} \mathds{1} \{|\vec{r}\cdot \tilde{\x}_i| > t \wedge y_i=\vec{w}^* \cdot \x_i\} \right) dt \\
&\ge \E_{\tilde{\x} \sim \tilde{\D}_\x}[|\vec{r}\cdot \tilde{\x}| \mathds{1} \{y = \vec{w}^* \cdot \x\} ] - \epsilon \max_{\tilde{x} \in \mathrm{supp}(\D_\x)} |\vec{r}\cdot \tilde{x}| \\
&\ge (1-\eta)\E_{\tilde{\x} \sim \tilde{\D}_\x}[|\vec{r}\cdot \tilde{\x}|] - \frac{(1-\gamma)\epsilon d}{1-\gamma-\epsilon d} \E_{\tilde{\x} \sim \tilde{\D}_\x}[|\vec{r}\cdot \tilde{\x}|].
\end{align*}
We similarly obtain an upper bound for the corrupted samples of $y_i \neq \vec{w}^* \cdot \x_i$, 
so by setting $\epsilon = O(\frac{1-2\eta}{d})$, with $m = \tilde{O}(\frac{d^3}{(1-2\eta)^2})$ samples, 
the structural condition of Lemma $\ref{lemma:structural_condition}$ is satisfied 
for any non-zero $\vec{r} \in \R^d$ with high probability. 
Thus, with Lemma \ref{lemma:forster_approx}, this proves Theorem \ref{thm:linear_special}.
\end{proof}

\subsection{The General Case: Proof of Theorem~\ref{thm:linear_exact}}
\label{ssec:general_case}

In general, we assume that $\D_\x$ is a distribution supported on $b$-bit integers such that 
$\Pr_{\x \sim \D_{\x}}[\vec{r} \cdot \x = 0] \le 1 - \rho$, for all non-zero $\vec{r} \in \R^d$, 
where $\rho \in (0, 1]$ is a parameter.
Since a non-trivial fraction of samples may concentrate on a particular subspace, 
there may not exist a transformation that puts the points into radial-isotropic position.
In fact, the following condition is necessary and sufficient for the existence of such a transformation.

\begin{lemma}[Lemma 4.19 of \cite{hopkins2020point}] \label{lemma:kane_radial}
Given a set of points $S \subseteq \R^d$, the following conditions are equivalent:
\begin{enumerate}
\item For any $\gamma > 0$, there exists an invertible linear transformation $\vec A$ 
such that $\vec A$ puts $S$ in $\gamma$-approximate radial-isotropic position.
\item For every $1 \le k \le d$, every $k$-dimensional subspace contains at most $k/d$-fraction of $S$.
\end{enumerate}
\end{lemma}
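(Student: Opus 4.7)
The plan is to prove the two directions separately. Direction (1)$\Rightarrow$(2) is a direct trace calculation. Fix a $k$-dimensional subspace $V \subseteq \R^d$ and suppose $m$ points of $S$ lie in $V$. For any $\gamma > 0$, apply the given $\vec{A}$ placing $S$ in $\gamma$-approximate radial-isotropic position; let $\tilde{\x}_i = \vec{A}\x_i/\|\vec{A}\x_i\|_2$ and $V' = \vec{A}V$, which is still $k$-dimensional. Choose an orthonormal basis $\vec{v}_1,\dots,\vec{v}_d$ of $\R^d$ whose first $k$ vectors span $V'$. Summing $(d/n)\sum_i(\tilde{\x}_i\cdot \vec{v}_j)^2 \ge 1-\gamma$ over the $d-k$ indices $j>k$ and subtracting from the identity $\sum_{j=1}^d (d/n)\sum_i(\tilde{\x}_i\cdot \vec{v}_j)^2 = d$ (which uses only $\|\tilde{\x}_i\|_2=1$) yields
\[
\frac{d}{n}\sum_{i=1}^n \|\proj_{V'}\tilde{\x}_i\|_2^2 \;\le\; d-(d-k)(1-\gamma) \;=\; k+(d-k)\gamma.
\]
The $m$ points originally in $V$ map to unit vectors in $V'$, so the left side is at least $dm/n$; letting $\gamma\to 0$ gives $m \le kn/d$.

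For the substantive direction (2)$\Rightarrow$(1), I would use a Forster-type potential argument. Introduce
\[
\phi(\vec{A}) \;=\; \frac{d}{n}\sum_{i=1}^n \log\|\vec{A}\x_i\|_2 \;-\; \log|\det \vec{A}|,
\]
on $GL_d(\R)$, which is scale-invariant and hence descends to $SL_d(\R)$. A direct Fr\'echet-derivative computation shows that $\nabla\phi(\vec{A})$ vanishes precisely when $\sum_i (\vec{A}\x_i)(\vec{A}\x_i)^\top/\|\vec{A}\x_i\|_2^2 = (n/d)I$, i.e., $\vec{A}$ is an exact radial-isotropic transformation. The strategy is to use condition (2) to show that $\phi$ is coercive and bounded below on $SL_d(\R)$, so that a minimizing sequence $\vec{A}_j$ exists along which $\|\nabla\phi(\vec{A}_j)\|\to 0$; by continuity of the gradient condition, every sufficiently late $\vec{A}_j$ yields $\gamma$-approximate radial isotropy for any desired $\gamma > 0$.

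Coercivity is verified by reducing an arbitrary divergent sequence in $SL_d(\R)$, via the singular value decomposition together with orthogonal invariance, to a one-parameter family of the form $\vec{A}_s = e^{s}\proj_V + e^{-sk/(d-k)}\proj_{V^\perp}$ for some $k$-dimensional subspace $V$ (note that $\det \vec{A}_s = 1$). A direct expansion gives
\[
\phi(\vec{A}_s) \;=\; \frac{d\,s\,(md - nk)}{n(d-k)} + O(1) \quad \text{as } s\to -\infty,
\]
where $m$ is the number of points in $V$, and the case $s\to+\infty$ is symmetric with slope determined by the mass of $V^\perp$. Condition (2) applied to $V$ (respectively $V^\perp$) makes both one-sided slopes have the correct sign to force $\phi(\vec{A}_s)\to +\infty$, giving coercivity along every escape direction.

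The main obstacle I anticipate is the saturated case of condition (2) in which some $k$-dimensional subspace contains exactly $kn/d$ points: the dominant linear term in $\phi(\vec{A}_s)$ is then zero, $\phi$ is merely bounded rather than coercive along that direction, and an exact minimizer need not exist. This is precisely why the lemma quantifies over all $\gamma > 0$ rather than asserting exact radial isotropy: a minimizing sequence of $\phi$ still drives $\nabla\phi$ to zero, delivering $\gamma$-approximate radial-isotropic transformations of arbitrary precision, which is what the statement requires.
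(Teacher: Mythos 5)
The paper does not prove this lemma at all---it is imported verbatim as Lemma 4.19 of \cite{hopkins2020point}---so there is no internal proof to compare against; what you have written is essentially the standard Forster/Barthe variational argument from that literature, and it is correct in outline. Your direction (1)$\Rightarrow$(2) is complete and clean: the trace identity $\sum_{j=1}^d (d/n)\sum_i(\tilde{\x}_i\cdot\vec{v}_j)^2 = d$ combined with the $\gamma$-approximate lower bound on the $d-k$ directions orthogonal to $\vec{A}V$ gives exactly $m \le (k+(d-k)\gamma)n/d$, and letting $\gamma\to 0$ finishes it. Your direction (2)$\Rightarrow$(1) identifies the right potential, the right critical-point equation, and---importantly---the right explanation for why the lemma is phrased with ``for any $\gamma>0$'' rather than asserting exact radial isotropy (the saturated case $m=kn/d$).

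Two steps in the second direction are stated more casually than they can be executed. First, the reduction of an arbitrary divergent sequence in $SL_d(\R)$ to the two-block family $e^{s}\proj_V + e^{-sk/(d-k)}\proj_{V^\perp}$ is not literally available: after quotienting by left multiplication by $O(d)$ (the only orthogonal invariance $\phi$ has, since right multiplication changes the point set), a general escape direction is $e^{sH}$ for a traceless symmetric $H$ with up to $d$ distinct eigenvalues, and the asymptotic slope is a telescoping sum over the associated flag of subspaces; one must invoke condition (2) for \emph{every} subspace in the flag, not just one, and then use geodesic convexity of $s\mapsto\phi(e^{sH})$ (each $\log\|e^{sH}\x_i\|$ is a log-sum-exp of affine functions of $s$) to upgrade ``nonnegative recession slopes in every direction'' to ``bounded below.'' Your two-block family is the extremal case of this computation, not a reduction to it. Second, ``a minimizing sequence exists along which $\|\nabla\phi\|\to 0$'' is not automatic for a merely bounded-below $C^1$ function; you need Ekeland's variational principle (or the explicit compactness/renormalization argument in the cited references). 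Both gaps are standard to fill, and the final translation from small gradient to $\gamma$-approximate position is immediate since the gradient of $\phi$ at $\vec{A}$ is precisely the traceless matrix $(d/n)\sum_i(\vec{A}\x_i)(\vec{A}\x_i)^T/\|\vec{A}\x_i\|_2^2 - I$.
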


Given the condition above, there does not exist a radial-isotropic transformation for all non-zero points
if there exists a $k$-dimensional subspace $V$ that contains more than $k/d$-fraction of the non-zero points. 
In this case, we use the following algorithmic result from \cite{DKT21-forster} that efficiently computes
a radial-isotropic transformation for the points that lie on the subspace $V$.

\begin{lemma}[Theorem 1.4 of \cite{DKT21-forster}] \label{lemma:dkt_forster}
There exists an algorithm that, given
a set $S$ of $n$ points in $\mathbb{Z}^d\setminus \{0\}$ of bit complexity at most $b$ and $\delta > 0$, 
runs in $\poly(n, d, b, \log(1/\delta))$ time, and returns a subspace $V$ of $\R^d$ containing 
at least a $\dim(V)/d$-fraction of the points in $S$ and a linear transformation $\vec A : V \rightarrow V$ such that 
$\frac{1}{|S \cap V|} \littlesum_{\x \in S \cap V} (\frac{\vec A \x}{\|\vec A \x\|_2})(\frac{\vec A \x}{\|\vec A \x\|_2})^T 
= (1/\dim(V)) I_V + O(\delta)$, where the error is in spectral norm.
\end{lemma}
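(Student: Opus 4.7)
The plan is to compute the desired subspace $V$ and transformation $\vec{A}$ by first attempting to put all of $S$ into approximate radial-isotropic position via convex optimization, and then falling back to a recursive call on a proper subspace if this attempt fails.

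First, I would cast radial-isotropic position as the minimizer of a convex potential. Consider the Forster-style function
\[
\Phi(M) \;=\; \frac{d}{n}\sum_{\vec{x} \in S} \log(\vec{x}^T M \vec{x}) \;-\; \log\det M
\]
over positive definite $M \succ 0$. This function is convex on the PSD cone, and by direct computation its first-order optimality condition, after writing $M = \vec{A}^T \vec{A}$, is equivalent to $\frac{1}{n}\sum_{\vec{x} \in S} (\vec{A}\vec{x})(\vec{A}\vec{x})^T/\|\vec{A}\vec{x}\|_2^2 = (1/d) I$, i.e., radial isotropy. By Lemma~\ref{lemma:kane_radial}, $\Phi$ attains its minimum in the interior of the PSD cone (and hence a radial-isotropic transformation on all of $\R^d$ exists) if and only if every $k$-dimensional subspace contains at most a $k/d$-fraction of $S$.

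Next I would run a polynomial-time convex optimization routine such as the ellipsoid or cutting-plane method on $\Phi$, with a two-pronged stopping criterion: either (i) the gradient norm drops below $O(\delta)$, yielding a $\delta$-approximate radial-isotropic transformation on $V = \R^d$ which we return directly, or (ii) the potential value drops below an explicit polynomial threshold in $n, d, b$, acting as an unboundedness certificate. In case (ii), the current iterate $M$ has a subspace $W$ spanned by its smallest eigenvectors along which $\Phi$ is being driven to $-\infty$; by the Forster characterization the orthogonal complement $V = W^\perp$ must contain strictly more than $\dim(V)/d$-fraction of $S$, for otherwise $\Phi$ would have been bounded below in this direction. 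I would then recursively apply the algorithm to $S \cap V$ inside the ambient space $V$. Recursion depth is at most $d$, and the subspace-fraction invariant propagates multiplicatively: if the outer call establishes that $V$ has dimension $k$ and contains an $\alpha \ge k/d$ fraction of $S$, and the inner call returns $V' \subseteq V$ of dimension $k'$ with $\beta \ge k'/k$ fraction of $S \cap V$, then $V'$ contains $\alpha\beta \ge k'/d$ fraction of the original $S$, as required by the statement.

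The main obstacle, and the technical heart of the argument, is the quantitative analysis in case (ii): showing that once $\Phi$ drops below a concrete polynomial threshold, the small-eigenvalue subspace of the PSD iterate $M$ can be rounded to an exact rational (indeed integer) subspace $V$ containing the required fraction of $S$. This is where the $b$-bit complexity assumption enters: since all points lie in $\Z^d$ with bit complexity at most $b$, the possible values of $|S \cap V|/|S|$ for integer subspaces $V$ are separated by a gap of at least $2^{-\poly(n,d,b)}$, so a sufficiently negative value of $\Phi$ unambiguously identifies the correct $V$ via an eigenvalue gap and a standard lattice-rounding argument. The per-level complexity of the ellipsoid method on $\Phi$, using the bit-complexity bounds from Lemma~\ref{lemma:forster_approx} to control the iterates, is $\poly(n, d, b, \log(1/\delta))$, and the recursion adds only a factor of $d$, giving the claimed total runtime.
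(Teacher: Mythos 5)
First, a point of reference: the paper does not prove this statement at all --- it is imported verbatim as Theorem~1.4 of \cite{DKT21-forster}, so there is no in-paper proof to compare against. Your high-level plan (minimize a Forster-type potential; if the minimization diverges, extract a dense subspace and recurse, with the multiplicative fraction bookkeeping you give at the end) is indeed the natural route and is in the spirit of how such results are proved, so the architecture is reasonable. The recursion-depth and fraction-propagation arguments are fine.

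However, there are concrete gaps in the two load-bearing steps. (1) The potential $\Phi(M) = \frac{d}{n}\sum_{\x \in S}\log(\x^T M \x) - \log\det M$ is \emph{not} convex on the PSD cone in the Euclidean sense: each term $\log(\x^T M\x)$ is concave in $M$ and $-\log\det M$ is convex, and one can check on small examples (e.g.\ $d=2$, points $\vec{e}_1,\vec{e}_2,(\vec{e}_1+\vec{e}_2)/\sqrt{2}$, $M$ diagonal) that the sum fails to be convex along Euclidean lines. It is only geodesically convex on the PSD manifold; the standard way to get genuine convexity is Barthe's dual parametrization $F(t)=\log\det(\sum_i e^{t_i}\x_i\x_i^T) - \frac{d}{n}\sum_i t_i$ over $t\in\R^n$, as used in \cite{artstein2020radial} (cf.\ Lemma~\ref{lemma:forster_approx}). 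As written, running the ellipsoid or cutting-plane method on $\Phi(M)$ is not justified. (2) Your identification of the dense subspace is backwards. Substituting $M = \epsilon P_W + P_{W^\perp}$ shows $\Phi(M) = \bigl(\frac{d\,|S\cap W|}{n} - \dim W\bigr)\log\epsilon + O(1)$, so $\Phi\to-\infty$ as $\epsilon\to 0$ precisely when $W$ contains \emph{more} than a $\dim(W)/d$ fraction of $S$; i.e., the dense subspace is the span of the \emph{small} eigenvectors of the divergent iterate, not its orthogonal complement as you claim. Finally, the step you yourself flag as the ``technical heart'' --- turning a sufficiently negative potential value after polynomially many iterations into an exactly identified rational subspace with the required fraction, with bit-complexity control --- is asserted in one sentence, and that quantitative rounding is essentially the entire content of Theorem~1.4 of \cite{DKT21-forster}. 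So the proposal sketches the right strategy but does not constitute a proof.
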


This algorithmic result relaxes the assumption on the underlying distribution of Theorem~\ref{thm:linear_special} 
by allowing us to compute a radial-isotropic transformation for a set of points that may concentrate on a particular subspace.

Our ReLU learning algorithm leverages this algorithmic result. 
The main algorithmic idea is to apply radial-isotropic transformation iteratively on any concentrated subspace.
For example, if there exists a subset of points lying in a $k$-dimensional subspace $V$, so that there does not
exist a radial-isotropic transformation for the whole set of points, i.e., more than $k/d$-fraction of the points lie on $V$, 
then we can efficiently find such a subspace $V$ with a corresponding radial-isotropic transformation 
in its lower-dimensional space, using Theorem~\ref{lemma:dkt_forster}. With this ingredient, 
we can compute $\proj_V \vec w^*$ using Algorithm~\ref{alg:linear} in $k$-dimensions. 
Similarly, we compute the orthogonal component of $\vec w^*$ on the orthogononal subspace $V^\perp$. 
Here it is important that we have enough points from the original set of samples
that do not project to zero in $V^\perp$, since a significant portion lies on $V$. 

The pseudocode of our learner is presented below, followed by a statement and proof of its properties.
We denote by \textsc{GeneralizedForster} the algorithm that achieves Theorem 1.4 of \cite{DKT21-forster}.

\begin{algorithm}[hbt!]
   \caption{Linear function recovery via radial isotropy}
\begin{algorithmic}[1]
    \Procedure{$\textsc{RecoverLinear}$}{$(\x_i, y_i)_{i=1}^{m} \subset \R^d \times \R$}
    \State Run \textsc{GeneralizedForster} to find subspace $V$ and radial-isotropic transformation $\vec A$
    \If{$\dim(V) = d$} 
      \State $\tilde{S} \gets \{(\frac{\vec{A} \x_i}{\|\vec{A} \x_i\|_2}, \frac{y_i}{\|\vec{A} \x_i\|_2}): i \in [m] \text{ for } \x_i \neq 0\}$
      \State $\tilde{\vec{w}} \gets \arg \min_{\vec{w} \in \R^d} \sum_{(\tilde{\x}, \tilde{y}) \in \tilde{S}} |\tilde{y} - \vec{w} \cdot \tilde{\x}|$ by solving the LP.
      \State \textbf{return} $\vec{A} \tilde{\vec{w}}$
    \EndIf
      \State $S_V \gets \{(\x_i, y_i): i \in [m] \text{ where } \x_i \in V\}$
      \State Rotate $S_V$ into $\R^k$ and run Algorithm~\ref{alg:linear} with transformation $\vec A$.
      \State Let $\vec{w}$ be the output from the previous step, rotated back into $\R^d$.
      \State Let $V^{\perp}$ be the orthogonal subspace to $V$ in $\R^d$.
      \State $S_V^\perp \gets \{(\proj_{V^\perp} \x_i,\ y_i - \vec{w} \cdot \proj_{V} \x_i): i \in [m] \text{ where } \x_i \notin V\}$
      \State Rotate $S_V^\perp$ into $\R^{d-k}$ and run $\textsc{RecoverLinear}$ and rotate back to compute $\vec{w}^\perp \in \R^d$.
      \State \textbf{return} $\vec{w} + \vec{w}^\perp$
    
    \EndProcedure

    \State $m \gets \tilde{O}(\frac{d^3}{\rho(1 - 2\eta)^2})$
    \State Draw $m$ i.i.d.\ samples $(\x_i, y_i)_{i=1}^{m}$ with $\eta$-Massart noise
    \State $\textsc{RecoverLinear}((\x_i, y_i)_{i=1}^{m})$
  \end{algorithmic}
\end{algorithm}

\begin{proof}[Proof of Theorem~\ref{thm:linear_exact}]
Assume, for the sake of simplicity, that $\rho = 1$ so that the distribution does not concentrate 
on any lower-dimensional subspace. Then there always exists a radial-isotropic transformation $\vec A$ 
for any set of samples, as long as it has at least $d$ points, since all points are in general position. 
As we have shown in the proof of Theorem~\ref{thm:linear_special}, when $\rho=1$, 
the algorithm correctly returns $\vec{w}^*$ with high probability using $\tilde{O}(\frac{d^3}{(1-2\eta)^2})$ samples.

For $0 < \rho < 1$, the correctness of the algorithm follows from a standard divide-and-conquer argument, 
as long as each call to the algorithm is supplied with a sufficient number of (non-zero) samples. 
Thus, we only need to analyze the sample complexity and ensure each recursive call into $k$-dimensions 
receives enough samples as an input. 

For the first iteration of $\textsc{RecoverLinear}$ in $\R^d$, if there exists $\vec A$
that puts the remaining non-zero points 
into radial isotropy, we only need to sample $\tilde{O}(\frac{d^3}{\rho(1-2\eta)^2})$ points from $\D_\x$. 
The factor of $\rho^{-1}$ appears because in the worst case we have $\rho$-fraction
of the marginal distribution $\D_\x$ concentrating on $0$, so that $\rho$-fraction of the samples
cannot be put into radial-isotropic position. 
Thus, we need $m_d := \tilde{O}(\frac{d^3}{\rho(1-2\eta)^2})$ many samples for $d$ dimensions 
if $\vec{A}$ exists.
Then, similarly to Theorem~\ref{thm:linear_special}, if $\vec{A}$ exists, 
$m_d$ many samples are sufficient for $\textsc{RecoverLinear}$ in $\R^d$ to find $\vec{w}^*$ 
with probability at least $9/10$. We now need to prove that the algorithm works with $m_d$ samples 
with probability at least $9/10$ even when $\vec{A}$ does not exist.

In the case that $\vec{A}$ does not exist, by Lemma~\ref{lemma:kane_radial}, 
there must exist a $k$-dimensional subspace $V$ that contains more than $k/d$-fraction of the points. 
Here, we apply the algorithm on the subset $S_V := \{(\x_i, y_i): i \in [m] \text{ where } \x_i \in V\}$ in $\R^k$. 
In this subproblem, the number of samples is $|S_V| \ge (k / d) m_d \ge m_k$, 
and thus is sufficient to accurately compute the projection of $\vec{w}^*$ on $V$.

What remains is ensuring that $S_V^\perp$ has enough non-zero samples, 
despite more than $k/d$-fraction of the points projecting to zero on the orthogonal subspace $V^\perp$. 
In other words, we want to upper bound the probability that $S_V$ simultaneously contains more than 
$k/d$-fraction of the points and more than $m_d -\rho m_{d-k}$ points, for $1 \le k < d$. 
By the union bound, we can simplify the following expression.
\begin{align*}
&\;\;\;\; \Pr \left[ \big( |S_V| \ge km_d/d \big) \wedge \big( |S_V| \ge m_d-\rho m_{d-k}\big) \text{ for } 1 \le k < d \right] \\ 
&\le \Pr[|S_V| \ge (1-\rho+ \rho k / d)m_d \text{ for } 1 \le k < d  ] \\
&\le \sum_{k=1}^{d-1} \Pr[|S_V| \ge (1-\rho+ \rho k / d)m_d ]
\le (d-1) \Pr[|S_V| \ge (1-\rho+\rho/d)m_d ] \;.
\end{align*}
If $\rho \ge 1/2$, Hoeffding's inequality bounds from above this quantity 
by $(d-1) \exp(-\frac{m_d}{2d^2})$. 
In the case that $1-\rho \ge 1/2$, we have that $D_{\mathrm{KL}}(1-\rho + \delta || 1 - \rho) \ge \frac{\delta^2}{2\rho(1-\rho)}$, 
so the Chernoff bound yields the following inequality: 
\begin{align*}
\sum_{k=1}^{d-1} \Pr[|S_V| \ge (1-\rho+\frac{\rho k}{d})m_d] 
&\le \sum_{k=1}^{d-1} \exp \left( -\frac{(\rho k / d)^2}{2\rho(1-\rho)}m_d \right) \\
&\le (d-1) \exp \left( -\frac{\rho}{2d^2}m_d \right) \;.
\end{align*}
Thus, with $m_d  = \tilde{O}(\frac{d^3}{\rho(1-2\eta)^2})$, we can guarantee that any heavy subspace $V$ 
with more than $k/d$-fraction of the points will not contain too many samples, 
meaning that there will be $\tilde{O}(\frac{d^3}{(1-2\eta)^2})$ non-zero points in $S_V^\perp$ 
to compute a radial-isotropic transformation if one exists. 
Furthermore, the error probability we calculated above may accumulate over at most $d$ recursive calls. 
Since the error we have above is bounded in terms of $\exp(-\frac{d^3}{(1-2\eta)^2})$, 
after applying the union bound, we can still ensure that the algorithm finds $\vec{w}^*$ with high probability.
\end{proof}

\section{ReLU Regression with Massart Noise}
\label{sec:relu}

In this section, we give our main algorithmic result of exact recovery for 
ReLUs in the presence of Massart noise, establishing Theorem~\ref{thm:relu_exact}.

For the case of ReLUs, we can still use the structural condition of Lemma 
$\ref{lemma:structural_condition}$ connecting $\ell_1$-minimization 
$\arg \min_{\vec{w}} \frac{1}{m}\sum_{i=1}^m |y_i - \relu(\vec{w} \cdot \x_i)|$ 
to $\ell_0$-minimization. 
However, efficiently minimizing this $\ell_1$-objective is no longer straightforward, 
because the objective function is non-convex. 
Despite this fact, we show that it is possible to efficiently recover a ReLU under mild 
anti-concentration assumptions on the underlying distribution.

The key idea enabling the algorithm of Theorem \ref{thm:relu_exact} is characterizing 
the condition under which we can compute an efficient separation oracle 
between the query $\vec{w}$ and the true parameter vector $\vec{w}^*$. 
Once we obtain a separation oracle, we can use the ellipsoid method 
to recover $\vec{w}^*$ exactly. In turn, similarly to Lemma $\ref{lemma:structural_condition}$, 
we identify a sufficient structural condition on the dataset, 
which allows us to efficiently compute a separating hyperplane between 
$\vec{w}$ and $\vec{w}^*$ if $\vec{w} \neq \vec{w}^*$, 
and then use radial-isotropic transformations such that this condition is satisfied. 
We state this separation condition in the following lemma.

\begin{lemma}[Separation Condition]\label{lemma:separation_condition}
Let $\mathcal{H}$ be a hypothesis class such that 
$\mathcal{H} = \{h_{\vec{w}}: h_{\vec{w}}(\x) = f(\vec{w} \cdot \x), \vec{w} \in \R^d\}$, 
where $f: \R \rightarrow \R$ is monotonically non-decreasing. 
Given a set of $m$ samples $(\x_i, y_i)_{i=1}^m$, let 
$\vec{w}^* = \arg \min_{\vec{w} \in \R^d} \frac{1}{m}\sum_{i=1}^m \|y_i - f(\vec{w}\cdot \x_i)\|_0$ be unique. Let $\Delta > 0$ and $\mathcal{B}(\vec{w}, \Delta)$ be the open ball of radius $\Delta$ centered at $\vec{w}$. Denote the empirical $\ell_1$-loss $\hat{L}(\vec{w}) = (1/m)\sum_{i=1}^m |y_i - f(\vec{w}\cdot \x_i)|$. 
Given a query $\vec{w}_0 \notin \mathcal{B}(\vec{w}^*, \Delta)$, if 
\begin{equation}
\label{eq:sep}
\sum_{y_i = f(\vec{w}^* \cdot \x_i)} |(\vec{w}_0 - \vec{w}^*)\cdot \x_i| f'(\vec{w}_0\cdot \x_i) - \sum_{y_i \neq f(\vec{w}^* \cdot \x_i)} |(\vec{w}_0 - \vec{w}^*)\cdot \x_i| f'(\vec{w}_0\cdot \x_i) \ge \Delta m \;, \tag{$\dagger$}
\end{equation}
then $\nabla \hat{L}(\vec{w}_0) \cdot (\vec w_0  - \vec w) = 0$ is a separating hyperplane for $\vec{w}_0$ and $\mathcal{B}(\vec{w}^*, \Delta/2)$ such that $\nabla \hat{L}(\vec{w}_0) \cdot (\vec w_0  - \vec w) > 0$ for $\vec w \in \mathcal{B}(\vec{w}^*, \Delta/2)$.
\end{lemma}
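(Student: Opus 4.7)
The plan is to verify positivity of the linear functional $\vec w \mapsto \nabla \hat{L}(\vec{w}_0) \cdot (\vec w_0 - \vec w)$ on $\mathcal{B}(\vec{w}^*, \Delta/2)$ by writing $\vec w_0 - \vec w = (\vec w_0 - \vec w^*) + (\vec w^* - \vec w)$ and handling the two pieces separately. The first piece $\nabla \hat{L}(\vec{w}_0) \cdot (\vec w_0 - \vec w^*)$ will be bounded from below by the assumed inequality $(\dagger)$, while the second piece will be controlled via Cauchy--Schwarz using the ball radius and the norm of $\nabla \hat{L}(\vec{w}_0)$.

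For the first piece, I would expand the subgradient
\[
\nabla \hat{L}(\vec{w}_0) = -\frac{1}{m}\sum_{i=1}^m \sign(y_i - f(\vec{w}_0 \cdot \x_i))\, f'(\vec{w}_0 \cdot \x_i)\, \x_i,
\]
and inspect the contribution of each sample to $\nabla \hat{L}(\vec{w}_0) \cdot (\vec w_0 - \vec w^*)$. For an inlier, $y_i = f(\vec w^* \cdot \x_i)$, so $\sign(y_i - f(\vec w_0 \cdot \x_i)) = \sign(f(\vec{w}^*\cdot\x_i) - f(\vec{w}_0\cdot\x_i))$; monotonicity of $f$ makes this the opposite sign to $(\vec w_0 - \vec w^*) \cdot \x_i$, so the sample's contribution to $\nabla \hat{L}(\vec{w}_0) \cdot (\vec w_0 - \vec w^*)$ is at least $|(\vec w_0 - \vec w^*) \cdot \x_i|\, f'(\vec w_0 \cdot \x_i)/m$ (and exactly that for the ReLU case, where $f$ is strictly increasing wherever $f' > 0$). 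For an outlier, the trivial bound $|\sign(\cdot)| \le 1$ shows the contribution is no smaller than $-|(\vec w_0 - \vec w^*) \cdot \x_i|\, f'(\vec w_0 \cdot \x_i)/m$. Summing and invoking $(\dagger)$ yields $\nabla \hat{L}(\vec{w}_0) \cdot (\vec w_0 - \vec w^*) \ge \Delta$.

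For the second piece, Cauchy--Schwarz gives $|\nabla \hat{L}(\vec{w}_0) \cdot (\vec w^* - \vec w)| \le \|\nabla \hat{L}(\vec{w}_0)\|_2\, \|\vec w^* - \vec w\|_2 < \|\nabla \hat{L}(\vec{w}_0)\|_2 \cdot \Delta/2$. In the intended regime, where the samples have been put in (approximate) radial-isotropic position so $\|\x_i\|_2 = 1$, and $f$ is the ReLU with $|f'| \le 1$, the triangle inequality immediately gives $\|\nabla \hat{L}(\vec{w}_0)\|_2 \le 1$. Combining the two pieces,
\[
\nabla \hat{L}(\vec{w}_0) \cdot (\vec w_0 - \vec w) \ge \Delta - \Delta/2 = \Delta/2 > 0
\]
for every $\vec w \in \mathcal{B}(\vec w^*, \Delta/2)$, which is the required strict separation.

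The main obstacle is the inlier sign identity: I need that whenever $f'(\vec w_0 \cdot \x_i) > 0$ the function $f$ is actually strictly increasing at $\vec w_0 \cdot \x_i$, so that $\sign(f(\vec w^* \cdot \x_i) - f(\vec w_0 \cdot \x_i))$ really equals $-\sign((\vec w_0 - \vec w^*) \cdot \x_i)$ rather than vanishing. For ReLU this is clean because the only flat region coincides with $f' = 0$. A secondary subtlety is choosing a valid subgradient at the non-differentiability points (the ReLU kink, and samples with $y_i = f(\vec w_0 \cdot \x_i)$); any measurable selection suffices, but one has to verify the per-sample lower bounds continue to hold regardless of how ties in $\sign$ are broken.
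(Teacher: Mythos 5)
Your proposal follows essentially the same route as the paper: expand $\nabla \hat{L}(\vec{w}_0)\cdot(\vec{w}_0-\vec{w}^*)$ sample by sample, use monotonicity of $f$ to show each inlier contributes $+|(\vec{w}_0-\vec{w}^*)\cdot\x_i|\,f'(\vec{w}_0\cdot\x_i)/m$ and bound each outlier's contribution below by the negative of that quantity, then invoke $(\dagger)$ to get the lower bound $\Delta$. Your Cauchy--Schwarz step extending the strict inequality from $\vec{w}^*$ to the whole ball $\mathcal{B}(\vec{w}^*,\Delta/2)$ (using $\|\x_i\|_2=1$ and $|f'|\le 1$ so that $\|\nabla\hat{L}(\vec{w}_0)\|_2\le 1$) is left implicit in the paper's proof, so spelling it out, along with the tie-breaking caveats for the subgradient, is a welcome addition rather than a deviation.
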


\begin{proof}
We establish that we can find a separating hyperplane that separates $\vec{w}$ 
sufficiently far from $\vec{w}^*$. This guarantees that the ellipsoid method shrinks in volume, 
while always containing a small ball around $\vec{w}^*$ that is never cut by a separating hyperplane.

Define the empirical loss $\hat{L}(\vec{w}) = \frac{1}{m}\sum_{i=1}^m |y_i - f(\vec{w}\cdot \x_i)|$. 
We can write
\begin{align*}
(\vec{w}_0-\vec{w}^*)\cdot \nabla \hat{L}(\vec{w}_0) &= \frac{1}{m}\sum_{i=1}^m [\sign(f(\vec{w}_0\cdot \x_i) - y_i) (\vec{w}_0 - \vec{w}^*) \cdot \nabla_{\vec{w}} f(\vec{w}_0\cdot \x_i) ] \\
&= \frac{1}{m}\sum_{i=1}^m [\sign(f(\vec{w}_0\cdot \x_i) - y_i) (\vec{w}_0 - \vec{w}^*) \cdot \x_i f'(\vec{w}_0\cdot \x_i) ] \\
&= \frac{1}{m}\sum_{i: y_i = f(\vec{w}^*\cdot \x_i)} [(\vec{w}_0 - \vec{w}^*) \cdot \x_i \cdot \sign(\vec{w}_0\cdot \x_i - \vec{w}^*\cdot \x_i) f'(\vec{w}_0\cdot \x_i)] \\
& \ \ \ \ + \frac{1}{m}\sum_{i: y_i \neq f(\vec{w}^*\cdot \x_i)} [(\vec{w}_0 - \vec{w}^*) \cdot \x_i \cdot \sign(f(\vec{w}_0\cdot \x_i) - y_i) f'(\vec{w}_0\cdot \x_i)] \\
&\ge \frac{1}{m}\sum_{i: y_i = f(\vec{w}^*\cdot \x_i)} [|(\vec{w}_0 - \vec{w}^*) \cdot \x_i| f'(\vec{w}_0\cdot \x_i)] \\
& \ \ \ \ - \frac{1}{m}\sum_{i: y_i \neq f(\vec{w}^*\cdot \x_i)} [|(\vec{w}_0 - \vec{w}^*) \cdot \x_i| f'(\vec{w}_0\cdot \x_i)] \ge \Delta \;,
\end{align*}
where the third equality follows from monotonicity of $f$ and the last inequality 
follows from the inequality condition (\ref{eq:sep}) on the set of samples.
\end{proof}

In particular, the gradient of the empirical $\ell_1$-loss gives us the separating hyperplane above.
Other than the fact that only the points in the nonnegative side of the halfspace $\vec{w}$ 
are considered in the separation condition (\ref{eq:sep}), the condition resembles 
the structural condition used for linear functions. Analogously, we apply a radial-isotropic 
transformation to the points of $\vec{w}\cdot \x_i \ge 0$ and iterate the procedure 
on a concentrated subspace if such transformation does not exist for all non-zero points. 
We thus obtain a sub-procedure of the ellipsoid method (Algorithm~\ref{alg:new_sep}).
We specify the radius of a ball $\Delta$ in the proof of Theorem~\ref{thm:relu_exact},
but assume $\Delta$ is a quantity smaller than half of the distance between any two rational points
(which depends on the bit complexity $b$ of the samples and parameter).

\begin{algorithm}[hbt!]
   \caption{Separation oracle sub-procedure}
   \label{alg:new_sep}
\begin{algorithmic}[1]
   \State {\bfseries Input: } $\{(\x_i, y_i)\}_{i=1}^m$ with Massart noise and query $\vec{w}_0$.
   \State {\bfseries Output: } If $\vec{w}_0 \in  \mathcal{B} (\vec{w}^*, \Delta)$, return ``Yes''.
   \State \hspace{17.6mm} If not, return a separating hyperplane between $\vec{w}_0$ and $\mathcal{B} (\vec{w}^*, \Delta/2)$. 
   \Procedure{$\textsc{SEP}$}{$(\x_i, y_i)_{i=1}^m, \vec{w}_0$}
   \If{$\relu(\vec{w}_0 \cdot \x)$ fits at least $\frac{m}{2}$ points}
     \State {\bfseries return} ``Yes''
   \EndIf
   \State Define $S = \{(\x_i, y_i): \vec{w}_0 \cdot \x_i \ge 0, \x_i \neq 0 \text{ for } i \in [m]\}$.
   \State Run \textsc{GeneralizedForster} on $S_\x$ to find subspace $V$ and radial-isotropic transformation $\vec A$
   \If{$\dim(V) = d$} 
     \State $\vec{r} = \frac{1}{|S|}\sum_{(\x_i, y_i) \in S} \frac{\vec{A} \x_i}{\|\vec{A} \x_i\|_2} \cdot \sign{(\vec{w}_0 \cdot \x_i - y_i)}$
     \State {\bfseries return} separating hyperplane $\vec A^{-1} \vec{r} \cdot (\vec{w}_0 - \vec w) = 0$ 
   \EndIf
   \State Rotate $\{(\x, y) \in S: \x \in V\}$ and the query $\proj_V \vec{w}_0$ into $\R^k$ and run $\textsc{SEP}$ on them
   \If{\textsc{SEP} returns a hyperplane in $\R^k$} 
     \State {\bfseries return} the hyperplane rotated back and extended into $\R^d$ so that it is orthogonal to $V$
   \EndIf
   \State Let $V^\perp$ be the orthogonal subspace to $V$ in $\R^d$.
   \State Define $S_V^\perp = \{(\proj_{V^\perp} \x,\ y - \proj_{V} \vec{w}_0 \cdot \proj_{V} \x) : (\x, y) \in S \setminus V\}$.
   \State Rotate $S_V^\perp$ and the query $\proj_{V^\perp} \vec{w}_0$ into $\R^{d-k}$ and run $\textsc{SEP}$ on them
   \If{\textsc{SEP} returns a hyperplane in $\R^{d-k}$} 
     \State {\bfseries return} the hyperplane rotated back and extended into $\R^d$ so that it is orthogonal to $V^\perp$
   \EndIf
   \State {\bfseries return} ``Yes''
   \EndProcedure
\end{algorithmic}
\end{algorithm}

The main difference between the algorithm for ReLUs and linear functions is that here we must apply a different radial-isotropic transformation to every new subset of points in every iteration, depending on the query $\vec{w}_0$. In turn, the algorithm transforms the space according to 
a new transformation $\vec{A}$, computes a separating hyperplane, 
and transforms the hyperplane back into the original space. 
Due to these repeated transformations, the proof of Theorem \ref{thm:relu_exact} 
requires a more intricate argument to make the ellipsoid method work correctly. 
We now prove Theorem~\ref{thm:relu_exact}.

\begin{proof}[Proof of Theorem~\ref{thm:relu_exact}]
Let $\vec{w}_0$ be the original query to the oracle and assume that 
the separation condition (\ref{eq:sep}) holds for a set of points 
$({\vec{A} \x_i}/{\|\vec{A} \x_i\|_2}, {y_i}/{\|\vec{A} \x_i\|_2})_{i=1}^m$ and $\vec{A}^{-1}\vec{w}_0$. 
Then, by Lemma \ref{lemma:separation_condition}, we have that 
$\vec{r} \cdot (\vec{A}^{-1}\vec w_{0} - \vec w) = 0$, 
where $\vec r = (1/m)\sum_{i=1}^m (\vec{A} \x_i / \|\vec{A} \x_i\|_2) \cdot \sign{(\vec{w}_0 \cdot \x_i - y_i)}$ separates $\vec{A}^{-1}\vec{w}_0$ and $\vec{A}^{-1}\vec{w}^*$.
Thus, the separation for $\vec{w}_0$ and $\vec{w}^*$ is 
$\vec{A}^{-1}\vec{r} \cdot (\vec{w}_0 - \vec{w}) = 0$.

It remains to check the sample complexity necessary 
for the separation condition (\ref{eq:sep}) to hold. 
Similarly to the proof of Theorem~\ref{thm:linear_exact}, 
we analyze two cases of $\rho = 1$ and $0 < \rho < 1$.

\paragraph*{Special Case: $\rho = 1$\\}
We first assume that $\rho = 1$, so that there is zero measure 
on any linear subspace and any finite set of $\x_i$'s is in general position.

Each unique set of $\{(\x_i, y_i): \vec{w}_0 \cdot \x_i \ge  0 \text{ for } i \in [m]\}$ 
determines a radial isotropic transformation, 
but there can only be at most $m^{d+1}$ unique sets, by the VC-dimension of halfspaces. 
So, there are only at most $m^{d+1}$ radial-isotropic transformations we have to consider. 
Let $\vec{A}$ be the linear transformation of the radial-isotropic transformation 
applied to points of $\vec{w}_0 \cdot \x_i \ge 0$. 
Denote $(\tilde{\x}_i, \tilde{y}_i) = ({\vec{A} \x_i}/{\|\vec{A} \x_i\|_2}, {y_i}/{\|\vec{A} \x_i\|_2})$, 
$\tilde{\vec{w}}^* = \vec{A}^{-1} \vec{w}^*$, $\tilde{\vec{w}}_0 = \vec{A}^{-1} \vec{w}_0$, 
and let $\tilde{\D}_\x$ be $\D_\x |_{\{\vec{w}_0 \cdot \x \ge 0\}}$ transformed by $\vec{A}$ and then normalized, 
so that $\tilde{\D}_\x$ lies on $\mathcal{S}^{d-1}$. Then, for all $m^{d+1}$ transformations, 
we have the following VC-inequality using $m = \tilde{O}(d/\epsilon^2)$ samples with high probability:
\begin{multline*}
\sup_{\vec{w}\in \R^d}\big|\Pr_{\tilde{\x} \sim \tilde{\D}_\x}[|(\vec{w}-\tilde{\vec{w}}^*) \cdot \tilde{\x}|\mathds{1} \{\vec{w}\cdot \tilde{\x} \ge 0,\ y = \relu(\vec{w}^*\cdot \x)\} > t] \\
- (1/m) \sum_{i=1}^{m} \mathds{1} \{|(\vec{w}-\tilde{\vec{w}}^*)\cdot \tilde{\x}_i| > t,\ \vec{w}\cdot \tilde{\x}_i \ge 0,\ y_i = \relu(\vec{w}^*\cdot \x_i)\} \big| \le \epsilon \;.
\end{multline*}
Let $S = \{(\tilde{\x}_i, \tilde{y}_i): \vec{w}_0 \cdot \x_i \ge 0\}$. 
Similarly to the proof of Theorem~\ref{thm:linear_exact}, 
we have that $\max_{(\tilde{\vec{x}}, \tilde{y}) \in S} |\vec{r}\cdot \tilde{\vec{x}}| \le 
\frac{(1-\gamma)d}{1-\gamma-\epsilon d} \E_{\tilde{\x} \sim \tilde{\D}_\x} [| \vec{r} \cdot \tilde{\x} |]$, 
where $\gamma = 1/2$. Then, we can write
\begin{align*}
&\;\;\;\;\; (1/|S|) \sum_{(\tilde{\x}_i, \tilde{y}_i) \in S} |(\tilde{\vec{w}}_0 - \tilde{\vec{w}}^*) \cdot \tilde{\x} | \mathds{1} \{y_i = \relu(\vec{w}^*\cdot \x_i)\}\\
&= (m/|S|) \int_{0}^{\infty} \big( (1/m) \sum_{i=1}^m \mathds{1} \{|(\tilde{\vec{w}}_0-\tilde{\vec{w}}^*)\cdot \tilde{\x}_i| > t,\ \tilde{\vec{w}}_0 \cdot \tilde{\x}_i \ge 0,\ y_i = \relu(\vec{w}^*\cdot \x_i)  \big) dt \\
&\ge \E_{\tilde{\mathcal{D}}_\x}[|(\tilde{\vec{w}}_0-\tilde{\vec{w}}^*) \cdot \tilde{\x}|\mathds{1} \{y = \relu(\vec{w^*}\cdot \x)\} ] - (\epsilon m/ |S|) \max_{(\tilde{\x}, \tilde{y}) \in S} |(\tilde{\vec{w}}_0 - \tilde{\vec{w}}^*)\cdot \tilde{\x}| \\
&\ge \left( 1-\eta \right) \E_{\tilde{\D}_\x} [|(\tilde{\vec{w}}_0 - \tilde{\vec{w}}^*) \cdot \tilde{\x} | ] 
- (\epsilon m d/(|S|(1-2\epsilon d))) \E_{\tilde{\D}_\x}[|(\tilde{\vec{w}}_0 - \tilde{\vec{w}}^*) \cdot \tilde{\x} | ] \;.
\end{align*}
By setting $\epsilon = \tilde{O}(\lambda(1-2\eta)/d)$, we can bound $m/|S|$ be at most a constant times $\lambda^{-1}$ for all $m^{d+1}$ possible subsets $S$ using Hoeffding's inequality and the union bound. Then we have 
\[(1/|S|) \sum_{(\tilde{\x}, \tilde{y}) \in S} |(\tilde{\vec{w}}_0 - \tilde{\vec{w}}^*) \cdot \tilde{\x} | \mathds{1} \{\tilde{y}=\relu(\tilde{\vec{w}}^* \cdot \tilde{\x})\} \ge ((1/2)+(1-2\eta)/4)\E_{\tilde{\D}_\x} [|(\tilde{\vec{w}}_0 - \tilde{\vec{w}}^*) \cdot \tilde{\x}|].\] 
We can do the same to the corrupted points in $S$, 
getting $\le (1/2- (1-2\eta)/4)\E_{\tilde{\D}_\x} [|(\tilde{\vec{w}} - \tilde{\vec{w}}^*) \cdot \tilde{\x}|]$. 
Thus, for points $(\tilde{\x}, \tilde{y}) \in S$, we have the condition 
\begin{align*}
(1/|S|) \left(\sum_{\tilde{y}=\relu(\tilde{\vec{w}}^* \cdot \tilde{\x})} |(\tilde{\vec{w}}_0 - \tilde{\vec{w}}^*) \cdot \tilde{\x} | - \h \sum_{\tilde{y}\neq\relu(\tilde{\vec{w}}^* \cdot \tilde{\x})} |(\tilde{\vec{w}}_0 - \tilde{\vec{w}}^*) \cdot \tilde{\x} | \right) 
&\ge (1/2-\eta) \E_{\tilde{\D}_\x} [|(\tilde{\vec{w}}_0 - \tilde{\vec{w}}^*) \cdot \tilde{\x}|]   \\
&\ge (1/2-\eta) \|\tilde{\vec{w}}_0 - \tilde{\vec{w}}^*\|_2/d \;. \\
\end{align*}
\noindent
By Lemma \ref{lemma:separation_condition}, the inequality above implies that we can find a hyperplane of $\vec r$ 
that separates $\tilde{\vec{w}}_0$ and $\mathcal{B}(\tilde{\vec{w}}^*, \tilde{\Delta} / 2)$, 
where $\tilde{\Delta} = \frac{1-2\eta}{2d}\|\tilde{\vec{w}}_0 - \tilde{\vec{w}}^*\|_2$. 
In the original space of $(\x_i, y_i)_{i=1}^m$, we have that the transformed hyperplane of $\vec{A}^{-1} \vec r$ 
separates $\vec{w}_0$ and $\mathcal{B}(\vec{w}^*, \Delta / 2)$, 
where $\Delta = \frac{1-2\eta}{2d} \cdot \frac{\lambda_{\mathrm{min}}(\vec{A})}{\lambda_{\mathrm{max}}(\vec{A})} \|\vec{w}_0 - \vec{w}^*\|_2$, since applying $\vec A^{-1}$ to $\vec r$ keeps the distance from $\vec w^*$ to $\vec w_0$ at least 
$\frac{1-2\eta}{2d \lambda_{\mathrm{max}}(\vec{A})}\|\vec{w}_0 - \vec{w}^*\|_2$ 
and applying $\vec{A}$ to $\tilde{\vec w}^*$ bounds the distance from $\vec w^*$ to $\vec w_0$ to be at least $\Delta$. 
Therefore, we can set $\Delta$ of Algorithm~\ref{alg:new_sep} to be equal to 
$\min_{\vec w \neq \vec w^*}\frac{1-2\eta}{2d} \cdot \frac{\lambda_{\mathrm{min}}(\vec{A})}{\lambda_{\mathrm{max}}(\vec{A})} \|\vec{w} - \vec{w}^*\|_2$.

If $\vec{w} \neq \vec{w}^*$, by the bounded bit complexity $b$, 
we have that the volume of the ellipsoid decreases at every step but the ball of radius 
$\frac{(1-2\eta)\lambda_{\mathrm{min}}(\vec{A})}{4d\lambda_{\mathrm{max}}(\vec{A})}$ 
will always be contained in it. Thus, the algorithm terminates in $\poly(d,b,(1-2\eta)^{-1})$ 
iterations since 
$\log \big(\frac{\lambda_{\mathrm{max}}(\vec{A})}{\lambda_{\mathrm{min}}(\vec{A})}\big) = \poly(d,b,(1-2\eta)^{-1})$.

\paragraph*{General Case: $0 < \rho < 1$\\}
We now prove the general case, where $0 < \rho < 1$ and points may concentrate on subspaces.

Let $S = \{(\x_i, y_i): \vec{w}_0 \cdot \x_i \ge  0, \x_i \neq 0 \text{ for } i \in [m]\}$ 
and denote $S_{\x}$ to be the set of covariates $\x_i$'s of $S$. 
Given $\vec w_0 \neq \vec w^*$, we have proved that there exists a separating hyperplane 
for the case when a radial-isotropic transformation $\vec A$ exists for $S_\x$ in the proof of Theorem~\ref{thm:relu_exact}. 
If such $\vec A$ does not exist, this necessarily means that there exists a $k$-dimensional subspace $V$ 
that contains at least $\frac{k}{d}$-fraction of $S_\x$. 

Given $V$, the points on $V$ are not affected by the orthogonal component of $\vec w^*$, 
but only $\proj_V \vec w^*$. This provides a basis for a divide-and-conquer approach, 
where we run the separation oracle on this smaller subspace of dimension $k < d$. 
So, with appropriate rotation and rescaling, we can represent the points of $S_\x$ on $V$ 
and all projections onto $V$ in $k$-dimensions using $b$-bits.

The base case of $d=1$ has a trivial radial-isotropic transformation, which can be any non-zero scalar, 
so the previous theorem we proved applies. Using strong induction, we assume that the separation oracle 
returns a correct output for the points on $V$ and $\proj_V \vec{w}_0$. If $\textsc{SEP}$ returns ``Yes'', 
then it must be that $\proj_V \vec{w}^* = \proj_V \vec w_0$. To find a separating hyperplane, 
we can then find one with respect to the orthogonal subspace $V^\perp$. 
Since $y_i = \vec w^* \cdot \x_i = \proj_V \vec{w}^* \cdot \proj_V \x_i + \proj_{V^\perp} \vec{w}^* \cdot \proj_{V^\perp} \x_i$, 
we can reduce this $d$ dimensions into $d-k$ and run $\textsc{SEP}$ in a smaller subspace. 
The recursive call returns a correct separating hyperplane for the projections by strong induction, 
because if it returns ``Yes'', then $\vec w^* = \vec w_0$; 
but this cannot happen by our first if statement that checks majority. 

When our recursive call does return a separating hyperplane in $V$, that means that the $k$-dimensional hyperplane 
separates $\proj_{V} \vec w_0$ and $\mathcal{B}(\proj_V \vec w^*, \Delta/2)$. Then the $d$-dimensional hyperplane, 
which contains $k$-dimensional hyperplane and is orthogonal to $V$, 
separates $\vec w_0$ and $\mathcal{B}(\vec w^*, \Delta/2)$. 
Similarly, the separating hyperplane in $V^\perp$ yields a $d$-dimensional hyperplane 
that separates $\vec w_0$ and $\mathcal{B}(\vec w^*, \Delta/2)$.

The sample complexity to guarantee a correct separation oracle at all iterations follows similarly 
to that of Theorem~\ref{thm:linear_exact}, and the number of iterations of the ellipsoid method 
is bounded by $\poly(d,b,(1-2\eta)^{-1})$, as the condition number of the linear transformations 
is bounded by $\poly(d,b,(1-2\eta)^{-1})$ by Proposition 2.2 of \cite{DKT21-forster}. 
This completes the proof of Theorem~\ref{thm:relu_exact}.
\end{proof}

\section{Experiments}\label{sec:exp}

In this section, we experimentally evaluate our algorithms 
that are based on radial-isotropic transformations to both synthetic and real datasets, 
and compare robustness in regression with the baseline methods of $\ell_1$ 
and $\ell_2$-regression. Our experiments demonstrate the efficacy of radial-isotropic transformations in robust regression and how our 
algorithms outperform baseline regression methods.

All experiments were done on a laptop computer with a 2.3 GHz Dual-Core Intel Core i5
CPU and 8 GB of RAM. We ran CVXPY's linear program solver for $\ell_1$-regression for linear functions.

\paragraph{Recovering Linear Functions}
We first show how our algorithm based on radial-isotropic position (Algorithm \ref{alg:linear}) compares to naive $\ell_1$-regression in exact recovery using an LP solver. 
As another baseline, we also ran $\ell_1$-regression with a normalization preprocessing step, where we normalize all points $(\x, y)$ to $(\frac{\x}{\|\x\|}, \frac{y}{\|x\|})$. We did not run regression with an isotropic-transformation preprocessing step, 
because this yields identical results as naive regression with no preprocessing.

We evaluated different transformations to the data on the following synthetic distribution. 
Define a mixture of Gaussians 
$\D_{\x} = \frac{1}{2} \mathcal{N}(\vec{e}_1, \frac{1}{d^2}I_d) + \frac{1}{2d}\littlesum_{i=1}^d \mathcal{N}(d \vec{e}_i, \frac{1}{d^2}I_d)$, 
where $\vec{e}_i$ denotes the $i$-th standard basis vector and $d=30$. 
Let $\vec{w}^* = 9\vec{e}_2 + \littlesum_{i=1}^d \vec{e}_i$. 
For various noise levels $\eta$, consider the following $\eta$-Massart adversary: 
the labels for all $\x$ for which any coordinate is greater than $\frac{d}{2}$ are flipped to 
$-\vec{w}^*\cdot \x$ with probability $\eta$, and the labels for all other points are not flipped. 
Essentially, only the points \textit{not} from $\mathcal{N}(\vec{e}_1, \frac{1}{d^2}I_d)$ 
are affected by Massart noise. 

We measured exact parameter recovery rate, which captures how often the algorithm solves for 
$\vec{w}^*$ exactly. We varied the noise rate $\eta$ while running the methods with $120$ 
samples from $\D_\x$. We also varied the sample size while keeping the noise $\eta = 0.25$. 
We ran $200$ trials for each measurement of exact recovery rate 
and the error bars represent two standard deviations from the mean.

\begin{figure*}[h!]
    \centering
    \subfigure[Recovering Linear (vs. noise rate)]{\includegraphics[width=0.48\textwidth]{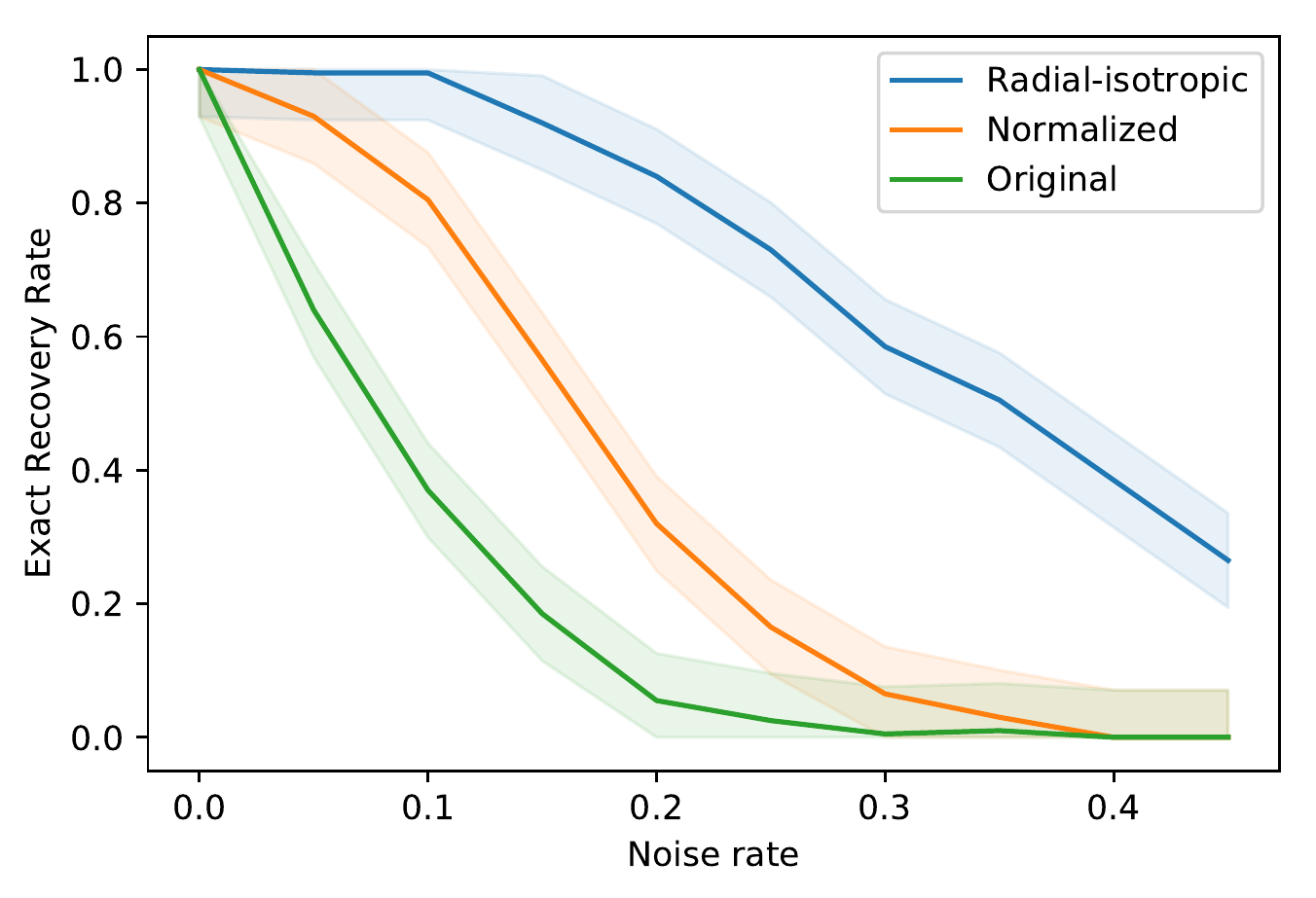}}
    \subfigure[Recovering Linear (vs. sample size)]{\includegraphics[width=0.48\textwidth]{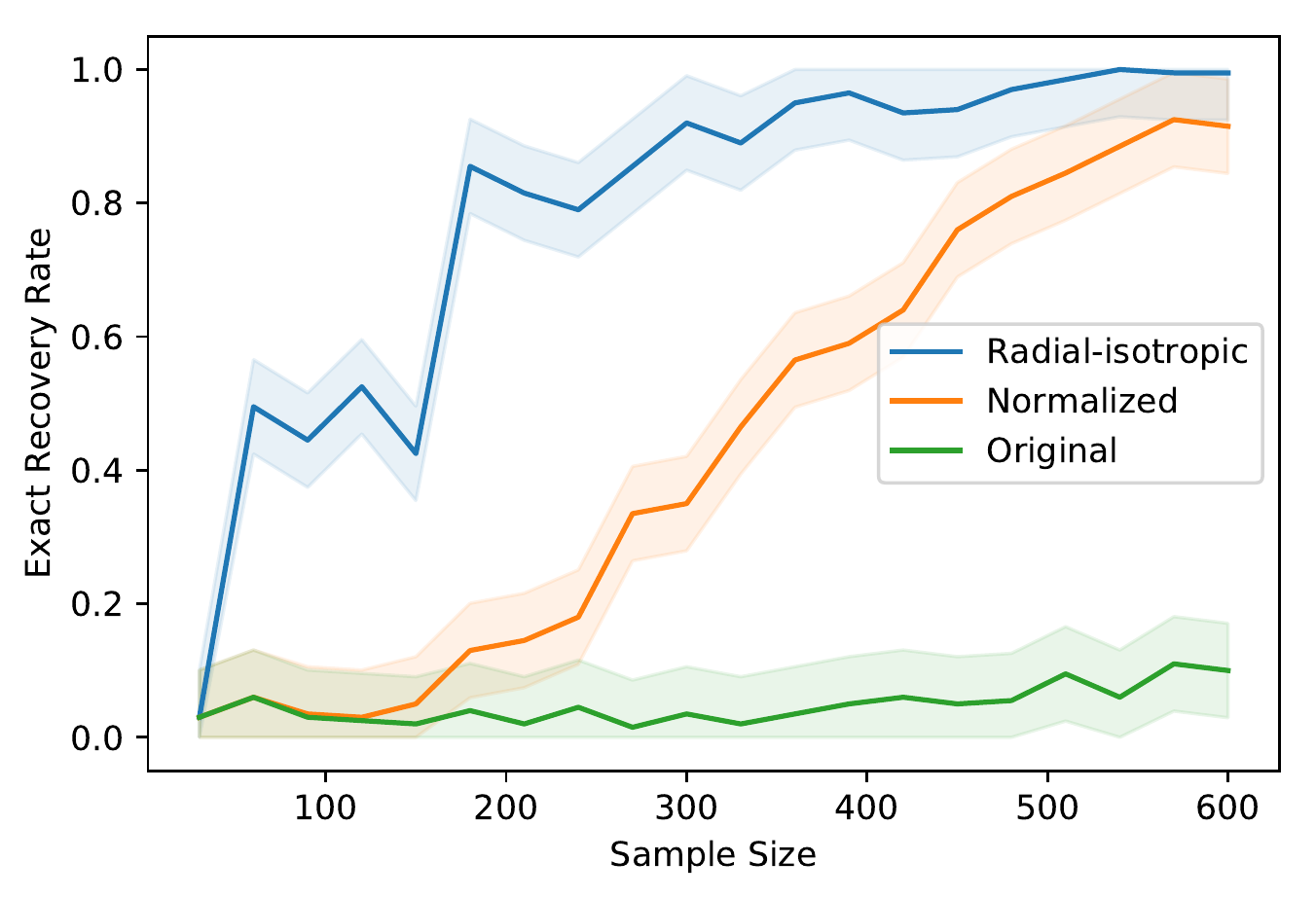}} 
\caption{Experiments for exact parameter recovery of linear functions on synthetic data. Exact recovery rate (y-axis) measures how often the algorithm outputs the true parameter out of 200 trials. We compare Algorithm \ref{alg:linear} with naive $\ell_1$-minimization and $\ell_1$-minimization with normalized data. Error bars cover two standard deviations from the mean.}
\label{fig:exp_linear}
\end{figure*}   

\begin{figure*}[h!]
    \centering
    \subfigure[Gradient Descent on ReLU]{\includegraphics[width=0.48\textwidth]{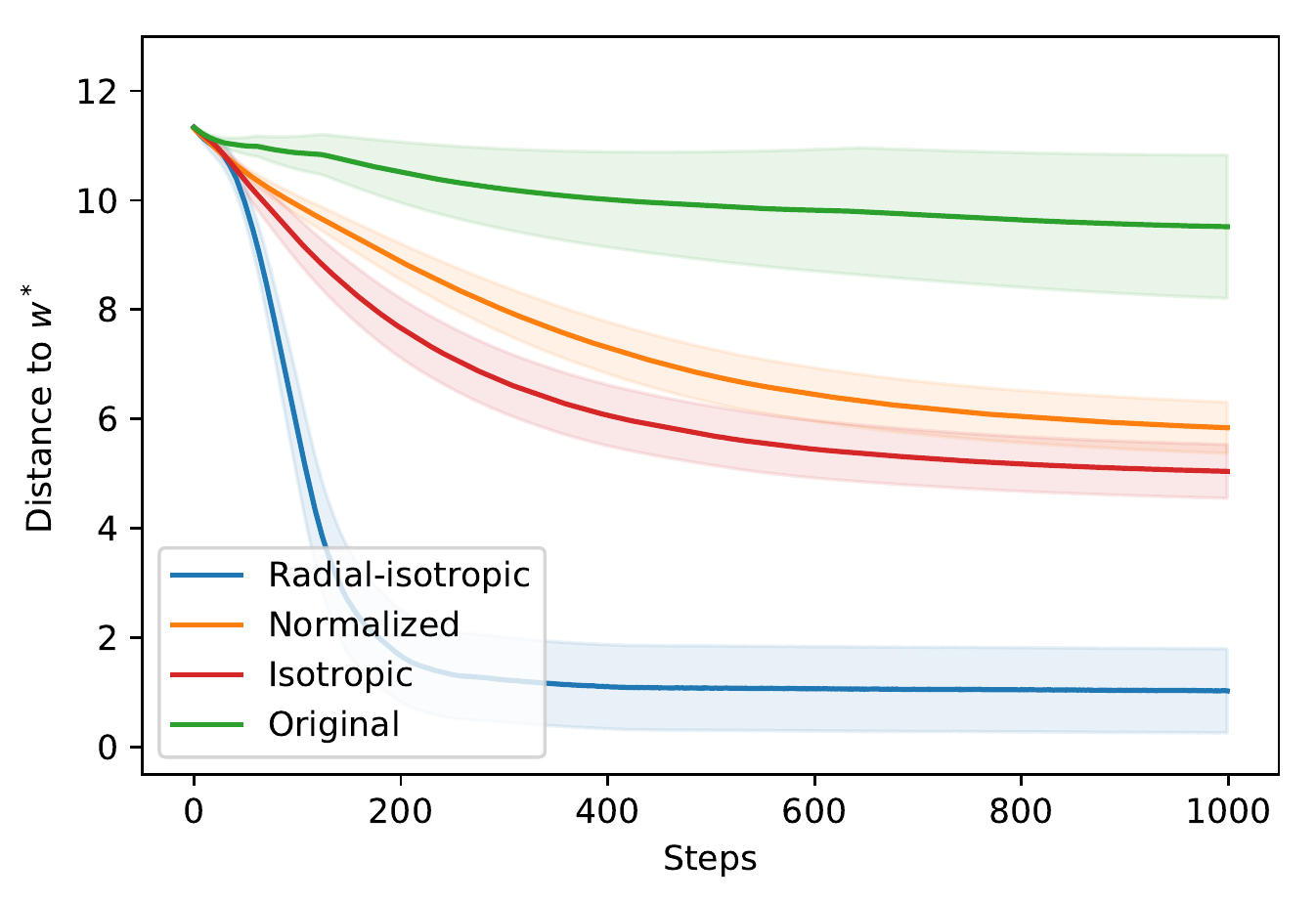}}
    \subfigure[Drug Discovery Dataset]{\includegraphics[width=0.48\textwidth]{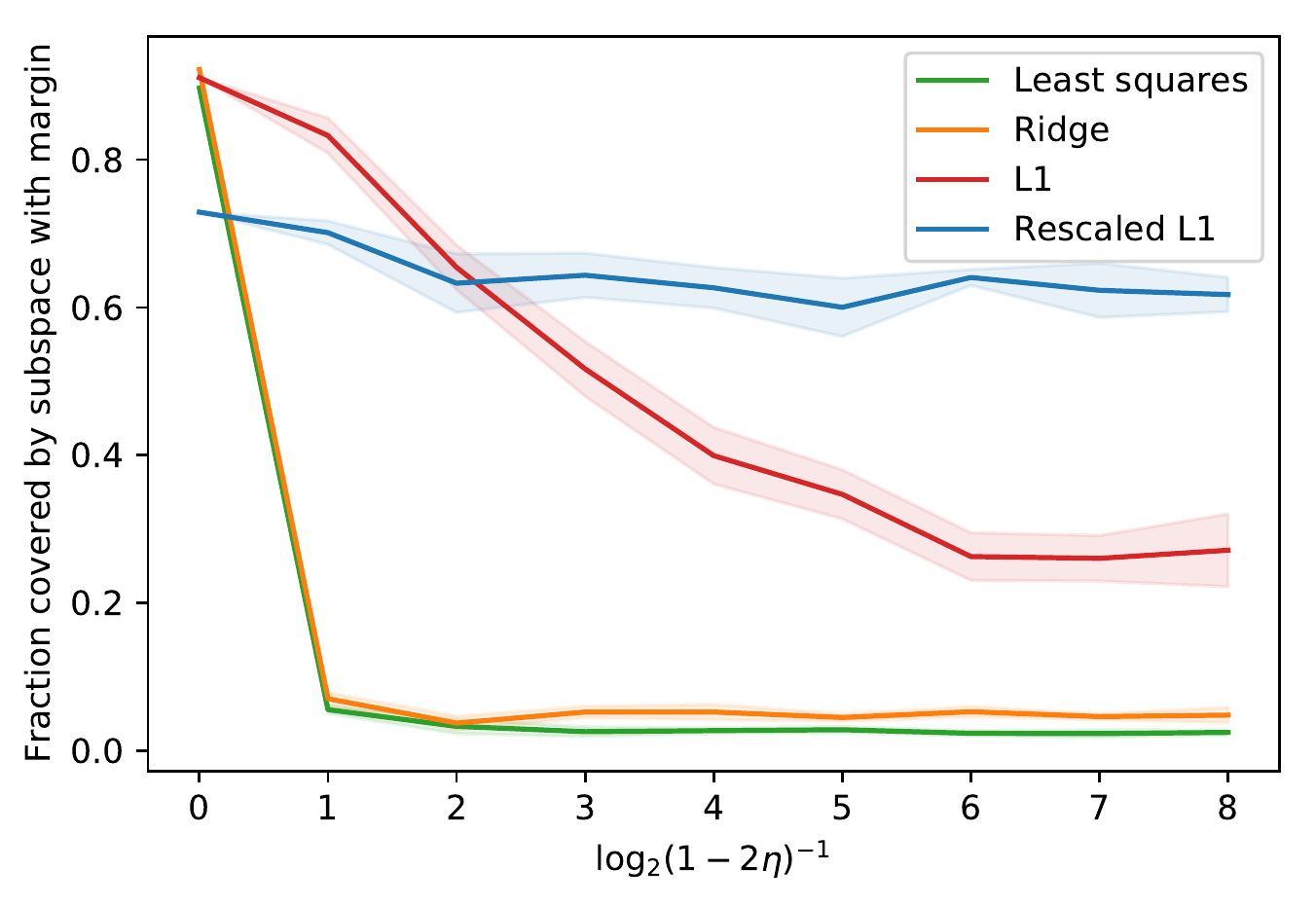}}
\caption{Experiments for exact parameter recovery of ReLUs. 
On the left, we compare the performance of different transformations with gradient descent on synthetic data, 
where we measure the $\ell_2$-distance to the optimal solution. 
On the right, we compare different regression methods applied to real data, 
where the labels are artificially corrupted with $\eta$-Massart noise. 
Rescaled L1 represents Algorithm~\ref{alg:linear}. 
We report the fraction of points that lie in the subspace generated by its output with margin.}
\label{fig:exp}
\end{figure*}

\paragraph{Recovering ReLUs}
For ReLUs, we used the same distribution as the experiments for linear functions to generate samples. 
We ran and compared constant-step-sized gradient descent on the empirical $\ell_1$-loss 
$\frac{1}{m}\littlesum_{i=1}^m |y_i - \relu(\vec{w}^* \cdot \x_i)|$ with different transformations to the data. 
We ran gradient descent since our separation oracle for the Ellipsoid method bears similarities with gradient descent. 
As seen in Lemma \ref{lemma:separation_condition}, this is due to the fact that our separating hyperplane 
is based on the gradient of the empirical $\ell_1$-loss of a subset of points.

The experiment is set up with $\eta = 0.4$, $\vec{w}^* = 9\vec{e}_2 + \littlesum_{i=1}^d \vec{e}_i$, $\vec{w}_0 = 0$, 
$240$ samples from $\D_\x$, and gradient descent step size of one. For `Original', we use a step size of $1/465$ 
to keep the magnitude of the points $\x_i$ comparable to that of the transformed points $\tilde{\x}_i$. 

In Figure \ref{fig:exp}(a), `Original' corresponds to naive gradient descent, 
while `Normalized' has a normalization preprocessing step. 
The transformations of `Isotropic' and `Radial-isotropic' follow our algorithm for ReLUs 
from Section \ref{sec:relu}, where the transformation is only applied to the positive-side points 
of $\vec{w}\cdot \x_i \ge 0$ for the current hypothesis $\vec{w}$. The gradient is then calculated 
with the transformed points $\tilde{\x}_i$ and appropriately transformed back to the original space 
in order to update $\vec{w}$. The gradient descent updates under transformation $\vec{A}$ and step size $\alpha$ is the following:
\[
\vec{w}' \gets \vec{A}^{-1} \vec{w} \text{\;\; then \;\;}\vec{w} \gets \vec{w} - \alpha \cdot (\vec{A} \nabla_{\vec{w}'} L') \;,
\]
where $L'$ denotes the empirical $\ell_1$-loss for the transformed subset of points. 
This update method is directly adapted from our Ellipsoid method.

\paragraph{Drug Discovery Dataset}
The drug discovery dataset was originally curated by \cite{olier2018meta} 
and we used the same dataset as the one used in \cite{DiakonikolasKKLSS2018sever}. 
The dataset has a training and test set of $3084$ and $1000$ points of $410$ dimensions. 
The $\eta$-Massart noise adversary corrupts the training data $(\x_i, y_i)$ 
so that all points are corrupted to flip labels to $-100 y_i$ with probability $\eta$. 
We compared $\ell_1$-regression with radial-isotropic transformation (`Rescaled L1') to other baseline methods, 
such as least squares and naive $\ell_1$-regression. For ridge regression, we optimized the regularization 
coefficient based on the uncorrupted data. We measure performance by computing the fraction of the test set 
that lies within the subspace generated by the output vector with a margin of $2$.

\paragraph{Results}
In Figure \ref{fig:exp_linear}, our algorithm with radial-isotropic transformation outperforms other baseline methods in robustness with respect to the noise level and in efficiency with respect to the sample size. This is in line with the results of Theorem \ref{thm:linear_exact}. Similarly, our experiments on ReLUs (Figure \ref{fig:exp}) also empirically 
demonstrate that radial isotropy significantly improves ReLU regression via gradient descent 
by making the dataset more robust to noise at each iteration. For the drug discovery dataset, 
although $\ell_1$-regression with radial isotropy performs slightly worse than naive $\ell_1$-regression when there is minimal noise, it significantly outperforms the baseline methods at regimes of moderate to high noise levels.

\section{Conclusion}

In this work, we propose a generalization of the Massart (or bounded) noise model, previously studied in 
binary classification, to the real-valued setting. The Massart model is a realistic semi-random noise model 
that is stronger than uniform random noise or oblivious noise, but weaker than adversarial label noise.
Our main result is an efficient algorithm for ReLU regression 
(and, in the process, also linear regression) in this model under minimal distributional assumptions.
At the technical level, we provide structural conditions for $\ell_0$-minimization to be efficiently computable. 
A key conceptual idea enabling our efficient algorithms is that of transforming the dataset 
using radial-isotropic transformations. We empirically validated the effectiveness of radial-isotropic transformations 
for robustness via experiments on both synthetic and real data.
In contrast to previous works on robust regression that require strong distributional assumptions, 
our framework and results may be seen as an intricate balance between slightly weakening 
the noise model yet affording generality in the underlying distribution.

\newpage

\bibliographystyle{alpha}
\bibliography{allrefs}

\newpage
\appendix

\section{Omitted Proofs from Section~\ref{sec:linear}}\label{supp:linear}

\subsection{Proof of Lemma~\ref{lemma:forster_approx}}

We apply Theorem 1.5 and Proposition 2.7 of \cite{artstein2020radial} to derive 
Lemma~\ref{lemma:forster_approx}. \cite{artstein2020radial} uses a generalized notion of radial isotropy, 
where vectors $\{\x_1, \dots, \x_n\} \subset \mathcal{S}^{d-1}$ lie in radial $c$-isotropic position 
if $\sum_{i=1}^n c_i \x_i \x_i^\top = I_d$ for $c \in \R^n$ such that $\|c\|_1 = d$. 
Here we are only interested in the case where $c = \frac{d}{n} \mathds{1}$, 
which represents radial isotropy as defined in Definition~\ref{lemma:forster}. 

The algorithm of Theorem 1.5 in \cite{artstein2020radial}, by definition, 
outputs a positive definite and symmetric matrix 
$\vec A := \big(\littlesum_{i=1}^d e^{t_i} \x_i \x_i^T \big)^{-1/2}$, where $t \in \R^d$, 
see, e.g., Section 2 of \cite{artstein2020radial}. 
Thus, we focus on whether this transformation indeed yields a $\gamma$-approximate 
radial-isotropic transformation in polynomial time.
For their algorithm to find a linear transformation $\vec A$ that puts the vectors in $\gamma$-approximate 
radial-isotropic position, we need to set $\varepsilon$ from Theorem 1.5 to be sufficiently small. 
We set $\sqrt{\varepsilon} = \frac{d}{n}\gamma$, 
so that $\|c_{\mathrm{apx}} - \frac{d}{n}\mathds{1}\|_2 \le \sqrt{\varepsilon}$. 
For $\vec v \in \mathcal{S}^{d-1}$, their algorithm transforms the set of vectors 
such that we have the following relationship:
\begin{align*}{}
\sum_{i=1}^n \frac{d}{n}\langle \x_i, \vec{v} \rangle^2 
&\ge \sum_{i=1}^n \frac{d/n}{d/n+ \sqrt{\varepsilon}} c_{\mathrm{apx, i}} \langle \x_i, \vec{v} \rangle^2 \\
&= \frac{d/n}{d/n+ \sqrt{\varepsilon}} = \frac{1}{1+\gamma}  \ge 1-\gamma \;.
\end{align*}
Therefore, their algorithm yields a $\gamma$-approximate radial-isotropic transformation.
It remains to show that $\|t^*\|_\infty$ is at most $\poly(n,d,b)$; 
if so, Theorem 1.5 shows that we can efficiently compute an invertible linear transformation 
that puts general position points in $\gamma$-approximate radial-isotropic position in 
$\poly(n,d,b,1/\gamma)$-time.

We have the following bound on $\|t^*\|_\infty$ from Lemma 4.6 and Lemma 4.7 of \cite{artstein2020radial}.
\[\|t^*\|_\infty \le \log \frac{n}{d} + (d-1) \log \left( \frac{32nd^2}{\Delta_S^{\mathrm{min}}} \right) \;, \]
where $\Delta_S = \mathrm{det}((\x_i)_{i\in S})^2$ is the square determinant of a $d$-tuple of unit vectors 
($|S| = d$) and $\Delta_S^\mathrm{min}$ is the smallest positive value of $\Delta_S$. 
Any positive determinant of a $d$-tuple of vectors supported on $b$-bit integers must be at least $1$, 
assuming each coordinate must be represented by an integer from 0 to $2^b - 1$. 
Then, after normalizing vectors so that we only consider unit vectors on $\mathcal{S}^{d-1}$, 
we have that $\Delta_S^\mathrm{min} \ge \frac{1}{(\sqrt{d} 2^b)^d}$. 
Thus, $\|t^*\|_\infty = O(d\log n + d^2 \log d + d^2b)$; 
so we can get $1/2$-approximate radial-isotropic position in $\poly(n,d,b)$-time.

By Lemma 4.3 of \cite{artstein2020radial}, the ratio between the largest and smallest eigenvalue 
of $\vec{A}$ is at most $\left(\frac{8n}{\delta^2}\right)^{(d-1)/2}$, 
where $\delta = \sqrt{\Delta_S^{\mathrm{min}}} / 2d$. Thus, 
the logarithm of the condition number of $\vec{A}$ is $O(d\log n + d^2 \log d + d^2b)$, which is $\poly(n,d,b)$. 
This concludes the proof of Lemma~\ref{lemma:forster_approx} for finding a $\gamma$-approximate 
radial-isotropic transformation in polynomial time for general position points.

\section{PAC Learning Linear Functions}
\label{supp:linear_pac}

In this section, we provide an algorithm for PAC learning linear functions 
in the presence of Massart noise. 

Recall that for linear functions, if $\D_\x$ lies within a subspace of $\R^d$,
 then $\vec w^*$ would not be information-theoretically identifiable. 
Thus, it is required that $\Pr_{\x \sim \D_{\x}}[\vec{r} \cdot \x = 0] \le 1 - \lambda < 1$
for our exact recovery results. However, 
even when this assumption is violated (and the problem is non-identifiable), 
we provide a PAC learning guarantee for the linear case.
Specifically, Theorem \ref{thm:linear_pac} allows us to avoid any assumptions 
on the underlying distribution and output a function arbitrarily close the true function.

\begin{theorem}[PAC Learning Linear Functions with Massart Noise]\label{thm:linear_pac}
Let $\D_{\x}$ be a distribution on $\R^d$ with bit complexity $b$ and let $\eta < 1/2$ 
be the upper bound on the Massart noise rate. Denote by $\vec{w}^*$ the true target vector. 
There is an algorithm that draws $\tilde{O}(\frac{d^4 b^3}{\epsilon^3 (1-2\eta)^2})$ samples,
runs in $\poly(d, b, \epsilon^{-1}, (1-2\eta)^{-1})$ time, and outputs $\hat{\vec{w}}$ 
such that $\Pr_{\x \sim \D_\x}[\hat{\vec{w}} \cdot \x \neq \vec{w}^* \cdot \x] \le \epsilon$ 
with probability at least $9/10$.
\end{theorem}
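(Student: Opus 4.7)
The plan is to adapt the recursive algorithm \textsc{RecoverLinear} of Theorem~\ref{thm:linear_exact} by introducing a mass threshold that lets us drop directions on which $\D_\x$ carries negligible weight. The anti-concentration hypothesis in Theorem~\ref{thm:linear_exact} is needed only for identifiability: if the output $\hat{\vec{w}}$ differs from $\vec{w}^*$ on a set of $\D_\x$-mass at most $\epsilon$, then $\Pr_{\x \sim \D_\x}[\hat{\vec{w}} \cdot \x \neq \vec{w}^* \cdot \x] \le \epsilon$ automatically, so any such $\hat{\vec{w}}$ is a valid PAC hypothesis.

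Concretely, I would set a threshold $\tau = \Theta(\epsilon/d)$ and mimic \textsc{RecoverLinear}. At each recursive call in dimension $d' \le d$, I run \textsc{GeneralizedForster}: if it returns a full-dimensional radial-isotropic transformation, I invoke Theorem~\ref{thm:linear_special} to recover the true parameter exactly on that subspace. Otherwise, it returns a proper subspace $V$ of dimension $k$ holding at least a $k/d'$-fraction of the non-zero $\x_i$'s; I recurse inside $V$ to recover $\proj_V \vec{w}^*$ exactly, then empirically estimate the mass of $\{\x : \proj_{V^\perp} \x \neq \vec 0\}$. If this mass is below $\tau$, I declare the $V^\perp$-component of the output to be $\vec 0$ and terminate this branch; otherwise I recurse on $V^\perp$ using the residual labels $y_i - \proj_V \vec{w}^* \cdot \proj_V \x_i$, exactly as in the exact-recovery algorithm.

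Since the recursion tree has depth at most $d$ and each terminating branch contributes at most $\tau$ mass to the disagreement region, the total disagreement probability is at most $d\tau = O(\epsilon)$, as required. On each branch where we actually recurse, we are invoking the guarantee of Theorem~\ref{thm:linear_exact} with parameter $\rho = \tau = \Theta(\epsilon/d)$, which yields a dominant term $\tilde O(d^4/(\epsilon(1-2\eta)^2))$ in the sample complexity; the additional $b^3/\epsilon^2$ factor in the stated bound I expect to arise from uniform convergence over the polynomially many candidate subspaces $V$ that \textsc{GeneralizedForster} can output on $b$-bit integer data, combined with the $\poly(d,b)$ conditioning bound of Lemma~\ref{lemma:dkt_forster} that controls how many bits of accuracy our empirical estimates need.

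The main obstacle I foresee is that subspaces whose true $V^\perp$-mass sits near $\tau$ can cause the algorithm to make inconsistent ``drop vs.\ recurse'' choices across resamplings and across nested levels of the recursion, which would corrupt the accounting of both the error and the sample complexity. I would handle this through a union bound over the finite family of candidate subspaces --- finite because the data has bit complexity $b$, so every $V$ returned by \textsc{GeneralizedForster} is determined by a linear relation with polynomially bounded bit complexity --- combined with Hoeffding's inequality applied to the empirical $V^\perp$-mass, ensuring that with probability at least $9/10$ every threshold decision along the entire recursion lies cleanly on the correct side of $\tau/2$ versus $2\tau$, so that the analysis of Theorem~\ref{thm:linear_exact} transfers to every recursive call without issue.
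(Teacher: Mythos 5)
Your high-level plan --- reduce PAC learning to exact recovery by running the recursive subspace decomposition and ``dropping'' any orthogonal complement carrying less than $\tau=\Theta(\epsilon/d)$ mass --- is not the paper's route, and as written it has a genuine gap in how errors propagate through the recursion. The recursive structure of \textsc{RecoverLinear} depends crucially on the inner call returning $\proj_V \vec{w}^*$ \emph{exactly}: only then are the residual labels $y_i - \proj_V\vec{w}^*\cdot\proj_V\x_i$ fed to the $V^\perp$-call clean up to Massart noise. Once you permit an inner branch to terminate with a dropped direction $\vec u$, your output satisfies $\hat{\vec{w}}-\vec{w}^*\propto\vec u$ on that branch, and two things go wrong. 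First, the $\tau$-mass certificate for the drop was measured against the points lying \emph{in} the current subspace (the conditional distribution at that recursion level); it says nothing about $\Pr_{\x\sim\D_\x}[\vec u\cdot\proj_V\x\neq 0]$ for points \emph{outside} $V$, which can be close to $1$. Hence the final disagreement probability is not bounded by summing $\tau$ over branches, and the claimed $d\tau$ accounting fails. Second, and for the same reason, the residual labels passed to the sibling $V^\perp$-recursion are systematically off by $\vec u\cdot\proj_V\x_i\neq 0$ on an uncontrolled fraction of those points, so that call no longer sees an $\eta$-Massart instance at all; your Hoeffding-plus-union-bound over candidate subspaces addresses the stability of the threshold decisions but not this label contamination. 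Even a repair in which all mass estimates are taken with respect to the full distribution runs into the fact that the contaminated fraction \emph{within} the $V^\perp$-sample can be as large as $\tau/\Pr[\x\notin V]$, which your thresholds do not separate from $1$. Relatedly, the $b^3/\epsilon^2$ factor in the stated sample bound is asserted rather than derived.

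For comparison, the paper avoids radial isotropy and recursion entirely for the PAC statement: it first runs the Dunagan--Vempala spectral outlier-removal procedure (Lemma~\ref{lemma:outlier_removal}) to obtain an ellipsoid $E$ of mass at least $1-\epsilon/2$ on which $\D_\x$ has no $\beta$-outliers with $\beta=\tilde{O}(db/\epsilon)$, and then reruns the VC/$\ell_1$ argument of Section~\ref{ssec:special_case} directly, using the pointwise bound $|\vec r\cdot\x|\le(2\beta)^{3/2}\,\E_{\D_\x^E}[|\vec r\cdot\x|]$ in place of the radial-isotropy inequality; the requirement $\epsilon<\tfrac{1-2\eta}{2(2\beta)^{3/2}}$ is exactly what produces the $d^4b^3/\epsilon^3$ dependence. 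The mass outside $E$ and the non-identifiable directions are then absorbed into the $\epsilon$ disagreement budget in a single step, rather than through a nested sequence of drop decisions. If you want to salvage your decomposition idea, you would need the drop criterion and its mass guarantee to be stated with respect to $\D_\x$ itself at every level, together with an argument that inexact inner recoveries corrupt at most an $O(\epsilon)$ fraction of the residual labels seen by every subsequent call; neither is established in your proposal.
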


The PAC learning algorithm is similar to Algorithm \ref{alg:linear} with a crucial difference. 
Instead of using a radial-isotropic transformation, we run a spectral outlier-removal procedure 
on the $m$ samples and solve the LP with the remaining inlier points only. 
This procedure, similarly to radial-isotropic transformations, minimizes the influence of points 
that are abnormally far from other points, and thus nullifies the adversarial noise added 
to such points. We use the following definition of an outlier.

\begin{definition}[Outlier]\cite{DV:04}
We call a point $\x$ in the support of the distribution $\D_{\x}$ a $\beta$-outlier 
if there exists a vector $\vec{w} \in \R^d$ such that 
$\langle \vec{w}, \x \rangle^2 > \beta \E_{\x \sim \D_{\x}}[\langle \vec{w}, \x \rangle^2]$. 
\end{definition}

The algorithm of Theorem \ref{thm:linear_pac} makes use of the following spectral outlier-removal procedure.  

\begin{lemma}[Theorem 3 of \cite{DV:04}] \label{lemma:outlier_removal}
Using $\tilde{O}(\frac{d^2 b}{\alpha})$ samples from $\D_{\x}$ where $\alpha > 0$, 
one can efficiently identify with high probability an ellipsoid $E$ such that 
$\Pr_{\x \sim \D_{\x}}[\x \in E] \ge 1-\alpha$ and $\D_{\x}|_E$ has no 
$\tilde{O}(\frac{db}{\alpha})$-outliers.
\end{lemma}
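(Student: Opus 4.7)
The plan is to prove the outlier-removal lemma via an iterative procedure that, in each round, identifies and removes a sample whose ``leverage score'' with respect to the running empirical second-moment matrix is too large, and to bound the number of such removals by a determinant potential argument.

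Concretely, I would draw $N = \tilde{O}(d^2 b / \alpha)$ i.i.d.\ samples to form the initial set $S_0$, and fix the threshold $\beta = \tilde{\Theta}(db/\alpha)$. At iteration $t$, compute the empirical second-moment matrix $\hat{\Sigma}_t = \frac{1}{|S_t|}\sum_{\x \in S_t} \x \x^T$; if there exists $\x \in S_t$ with $\x^T \hat{\Sigma}_t^{-1} \x > \beta$, remove it to form $S_{t+1}$; otherwise halt and output the ellipsoid $E = \{\x : \x^T \hat{\Sigma}_T^{-1} \x \le \beta\}$, where $T$ is the halting iteration. The equivalence between ``$\beta$-outlier under the empirical distribution on $S_t$'' and ``leverage score greater than $\beta$'' follows from a rank-one calculation: $(\vec w^T \x)^2 > \beta \vec w^T \hat{\Sigma}_t \vec w$ for some $\vec w$ iff $\x \x^T \not\preceq \beta \hat{\Sigma}_t$, iff $\x^T \hat{\Sigma}_t^{-1} \x > \beta$.

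Next, I would bound the number of iterations via the determinant potential $\Phi_t = \log \det\bigl(\sum_{\x \in S_t} \x \x^T\bigr)$. The matrix determinant lemma applied to the rank-one downdate $M_t \mapsto M_t - \x_t \x_t^T$ yields $\Phi_{t+1} - \Phi_t = \log(1 - \x_t^T M_t^{-1} \x_t) \le -\beta/|S_t|$ whenever a point is removed, since the removed point has empirical leverage score exceeding $\beta$ and therefore Euclidean leverage exceeding $\beta/|S_t|$ with respect to $M_t$. Because the samples have bit complexity at most $b$, their squared norms are $2^{O(b)}$, so $\Phi_0 \le O(db)$; and as long as $S_t$ retains $d$ generic integer points, $\Phi_t \ge -O(db)$. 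Consequently the total drop in $\Phi$ is $O(db)$, forcing the iteration count to be at most $O(dbN/\beta) = O(\alpha N)$ for our choice of $\beta$. Hence $|S_T| \ge (1 - O(\alpha))N$ and, by construction, every surviving sample lies in $E$.

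To lift the empirical guarantee to the population, I would combine a uniform-convergence argument over the class of centered ellipsoids (VC dimension $O(d^2)$, refined by a net over bit-complexity-$b$ coefficients to realize the claimed $\tilde O(d^2 b/\alpha)$ sample bound) with matrix concentration for the empirical second-moment matrix on the surviving set. The first ingredient lifts the empirical $(1-O(\alpha))$-mass of $E$ to $\Pr_{\x \sim \D_\x}[\x \in E] \ge 1 - \alpha$. The second shows that $\hat{\Sigma}_T$ is a constant-factor spectral approximation of the population covariance $\Sigma_{\D_\x|_E}$, so every $\x \in E$ satisfies $\x^T \Sigma_{\D_\x|_E}^{-1} \x = \tilde O(db/\alpha)$, which by the same rank-one equivalence is exactly the absence of $\tilde O(db/\alpha)$-outliers in $\D_\x|_E$.

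The main obstacle is the data-dependent nature of $E$: because the ellipsoid is defined adaptively by the removal trajectory, a single-ellipsoid Chernoff bound does not suffice, and a direct VC argument over all ellipsoids would degrade the sample complexity to $\tilde O(d^2/\alpha^2)$. Closing this gap — either via a bit-complexity-bounded net over the family of ellipsoids attainable by the algorithm, or by controlling the trajectory of $\hat{\Sigma}_t$ uniformly over all admissible removal sequences — is what drives the delicate $d^2 b/\alpha$ sample bound and constitutes the technical heart of the proof.
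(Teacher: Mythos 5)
First, note that the paper does not actually prove this statement: Lemma~\ref{lemma:outlier_removal} is imported verbatim as Theorem~3 of the cited Dunagan--Vempala work \cite{DV:04} and is used as a black box in the proof of Theorem~\ref{thm:linear_pac}. So there is no in-paper argument to compare against; your attempt has to stand on its own as a reconstruction of the cited result.

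On its own terms, the empirical half of your argument is sound and is essentially the standard leverage-score/determinant-potential pruning: the rank-one identity $\sup_{\vec w}(\vec w\cdot\x)^2/(\vec w^T\hat\Sigma_t\vec w)=\x^T\hat\Sigma_t^{-1}\x$, the downdate bound $\log\det(M_t-\x_t\x_t^T)-\log\det M_t\le-\beta/|S_t|$, and the bit-complexity bounds $\Phi_0=\tilde{O}(db)$ and $\Phi_t\ge 0$ for spanning integer point sets together force at most an $O(\alpha)$ fraction of removals. (You should still handle the degenerate case where the survivors stop spanning $\R^d$, since then $\hat\Sigma_t$ is singular and the potential is $-\infty$; restricting to the span of the survivors fixes this.) The genuine gap is the step you yourself flag and then leave open: transferring the guarantee from the adaptively constructed empirical ellipsoid $E$ to the population. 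Two distinct things are missing. First, $\Pr_{\x\sim\D_\x}[\x\in E]\ge1-\alpha$ needs a one-sided \emph{relative} uniform-convergence bound over centered ellipsoids (additive VC convergence would indeed cost $\tilde{O}(d^2/\alpha^2)$ samples; the relative form costs $\tilde{O}(d^2/\alpha)$ and should be invoked explicitly rather than gestured at). Second, and harder, the claim that $\D_\x|_E$ has no $\tilde{O}(db/\alpha)$-outliers requires that the population second moment $\E_{\x\sim\D_\x|_E}[(\vec w\cdot\x)^2]$ be within a constant factor of the empirical one on the survivors \emph{uniformly over all directions} $\vec w$, for an arbitrary distribution on $b$-bit integers whose directional second moments can span a ratio of $2^{\Theta(b)}$; the survivors are not an i.i.d.\ sample from $\D_\x|_E$ because both the pruning trajectory and $E$ itself are data-dependent. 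This is precisely where the $b$-dependence in the sample bound has to enter, and without it the lemma as stated is not established: as written, your proposal proves an empirical version of the guarantee but not the population statement that the paper actually uses.
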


Lemma \ref{lemma:outlier_removal} shows that there is an efficient algorithm that can preprocess any distribution supported on $b$-bit integers so that no large outliers exist. With this subroutine, we can achieve the same result of radial-isotropic transformation in Algorithm \ref{alg:linear} with an arbitrary distribution, albeit with a sample complexity dependent on the bit complexity. So instead of radial isotropy, we run the outlier removal procedure with $\alpha \gets \epsilon / 2$.

\begin{proof}[Proof of Theorem \ref{thm:linear_pac}]
Let $\D_\x$ be a distribution on $\R^d$ such that 
$\Pr_{\x \sim \D_{\x}}[\vec{r} \cdot \x = 0] \le 1 - \lambda$ for all non-zero vector $\vec{r} \in \R^d$. Although $\lambda$ is not a quantity we know in the PAC learning setting, we will act as if we know what $\lambda$ is, as we will later replace it with $\epsilon$. 

First, we prove that there is a $\poly(d, b, \lambda^{-1}, (1-2\eta)^{-1})$-time algorithm that 
draws $\tilde{O}(\frac{d^4 b^3}{\lambda^3 (1-2\eta)^2})$ samples and learns Massart corrupted 
linear functions exactly with high probability. 

By applying the outlier removal procedure with $\alpha = \lambda/2$ from Lemma~\ref{lemma:outlier_removal}, with high probability, the new ellipsoid-truncated distribution $\D_{\x}^E$ has no $\tilde{O}(\frac{db}{\lambda})$-outliers. Since the outputted ellipsoid $E$ has mass at least $1-\lambda/2$, $\D_{\x}'|_E$ remains fully $d$-dimensional. 

We then use the VC-inequality as in the proof for Theorem~\ref{thm:linear_exact}. Assume the $m$ samples here are the number of samples remaining after outlier removal. We have
\begin{align*}
\frac{1}{m}\sum_{i=1}^{m} |\vec{r}\cdot \x_i|\mathds{1} \{y_i = \vec{w}^* \cdot \x_i\} &= \int_{0}^{\infty} \left( \frac{1}{m} \sum_{i=1}^{m} \mathds{1} \{|\vec{r}\cdot \x| > t \wedge y=\vec{w}^* \cdot \x\} \right) dt \\
&\ge \E_{\D_{\x}^E}[|\vec{r}\cdot \x|\mathds{1} \{y = \vec{w}^* \cdot \x\} ] - \epsilon \max_{\x \in E} |\vec{r}\cdot \x| \\
&\ge (1-\eta - \epsilon (2\beta)^{1.5})\E_{\D_{\x}^E}[|\vec{r}\cdot \x|] \;.
\end{align*}
The last inequality comes from following claim. 
Let $x$ be a point in the support of a one-dimensional distribution $\D$, 
and let $X$ be the random variable defined by $\D$. 
If $x^2 \le \beta \E[X^2]$, then $|x| \le (2\beta)^{1.5} \E[|X|]$. 
This is because, w.l.o.g., we can assume $x \le 1$ by normalizing since $X$ is bounded above. 
Then $\E[X^2] > 1/\beta$, so we have that $\Pr[X^2 > \frac{1}{2\beta}] > \frac{1}{2\beta}$. 
In other words, $\Pr[|X| > \frac{1}{\sqrt{2\beta}}] > \frac{1}{2\beta}$, so $\E[|X|] > (\frac{1}{2\beta})^{1.5}$. Therefore, $|x| \le (2\beta)^{1.5} \E[|X|]$.

Ultimately, we want the RHS $1-\eta - \epsilon (2\beta)^{1.5}$ to be greater than $\frac{1}{2}$. Similarly, we can guarantee $\frac{1}{m}\sum_{i=1}^{m} |\vec{r}\cdot \x_i|\mathds{1}\{y_i \neq \vec{w}^* \cdot \x_i\}$ to be less than $\frac{1}{2} \E_{\D_\x^E}[|\vec{r}\cdot \x|]$. For this to hold, we need $\epsilon < \frac{1-2\eta}{2(2\beta)^{1.5}}$. Therefore, $1/\epsilon^2 = O(\frac{\beta^3}{(1-2\eta)^2})$ and $\beta = \tilde{O}(\frac{db}{\lambda})$, so we need at least $m = \tilde{O}(\frac{d^4b^3}{\lambda^3 (1-2\eta)^2})$ samples for exact recovery. 

Finally, we can replace the anti-concentration parameter $\lambda$ with $\epsilon$. 
This concludes the proof for PAC learning.
\end{proof}

\section{Oblivious Noise and Massart Noise}
\label{supp:oblivious}

In this section, we provide a formal comparison between the oblivious noise model
and the Massart noise model in the context of regression. 
We first define oblivious noise as was given in previous works 
(see, e.g.,~\cite{SuggalaBR019,d2020regress}).

\begin{definition}[Oblivious Noise]
Given $0 \le \eta < 1$, the oblivious adversary operates as follows. 
The algorithm specifies $m$ and the adversary corrupts the clean labels 
by adding sparse additive noise $\vec b = [b_1, b_2, \dots, b_m]^T$ 
with no knowledge of the covariates $\x_i$ such that 
\[ y_i = \vec w^* \cdot \x_i + b_i \;, \]
where $\x_i \sim \D_\x$, $\|\vec b\|_0 \le \eta m$, 
and $\vec b$ is independent of the $\x_i$'s and $\vec w^*$.
\end{definition}

An important distinction between Massart and obvious noise is their breakdown points.
While the breakdown point of a Massart adversary is $1/2$, 
the breakdown point of an oblivious adversary is not necessarily so. For instance, \cite{SuggalaBR019,d2020regress} recover $\vec w^*$ in the presence of oblivious noise, 
even when the noise rate $\eta$ is arbitrarily close to $1$.

The above is not a coincidence. It is not hard to show that the Massart model is a stronger
corruption model than oblivious noise, as established in the following lemma 
for the problem of regression.

\begin{lemma}
Given $m$ clean samples $(\x_i, f(\x_i))^{n}_{i=1}$ to corrupt, 
a Massart adversary of noise rate $\eta + \sqrt{\frac{\log(1/\delta)}{2m}}$
can simulate an oblivious adversary of noise rate $\eta$
with probability at least $1-\delta$.
\end{lemma}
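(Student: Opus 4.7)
The plan is to show that a Massart adversary of rate $\eta' := \eta + \sqrt{\log(1/\delta)/(2m)}$ can reproduce the joint distribution of $(\x_i, y_i)_{i=1}^m$ induced by any oblivious adversary of rate $\eta$, except on a failure event of probability at most $\delta$. The conceptual point is that the extra $\epsilon := \eta' - \eta$ of Massart budget creates just enough slack to absorb the concentration fluctuation of the number of Massart-flagged coordinates, so that a random embedding of the oblivious support into the flagged set becomes feasible.

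First I would make a symmetrization reduction. Because the covariates $\x_1,\dots,\x_m$ are i.i.d.\ and $\vec b$ is independent of them, the joint law of $(\x_i,y_i)_{i=1}^m$ under oblivious noise is invariant under permutations of the index set; hence, without loss of generality, the oblivious distribution on $\vec b$ may be taken to be permutation-invariant, so that $T := \mathrm{supp}(\vec b)$ is uniformly distributed on the size-$K$ subsets of $[m]$ with $K \le \eta m$ almost surely. Then I would describe the Massart simulator: draw independent Bernoulli$(\eta')$ flags $\xi_i$ and let $F := \{i : \xi_i = 1\}$; independently draw $\vec b$ from the (symmetrized) oblivious distribution; on the event $\{|F|\ge K\}$, select a uniformly random size-$K$ subset $T^\star \subseteq F$ and transport the nonzero entries of $\vec b$ to $T^\star$ via a uniformly random bijection $T \to T^\star$; set $y_i = f(\x_i)$ for $i \notin T^\star$ and $y_i = f(\x_i) + (\text{transported value})$ for $i \in T^\star$; on the complementary event output anything.

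Two things then need to be checked. First, the procedure is a valid Massart strategy: the flags $\xi_i$ are i.i.d.\ Bernoulli$(\eta')$ and independent of $(\x_i)$, and $y_i = f(\x_i)$ whenever $\xi_i = 0$, while for $\xi_i = 1$ the label is an arbitrary function of the adversary's information, which is exactly what Massart permits. Second, conditional on the good event, the output has the oblivious law: by the symmetry of the i.i.d.\ Bernoulli flags combined with the uniform choice of $T^\star$ inside $F$, the marginal distribution of $T^\star$ is uniform over size-$K$ subsets of $[m]$, and the uniform bijection matches the values; together these recover the permutation-invariant oblivious distribution from the symmetrization step. Finally, Hoeffding's inequality applied to $|F| \sim \mathrm{Binomial}(m,\eta')$ gives
\[
\Pr\bigl[|F| < \eta m\bigr] \;\le\; \exp\bigl(-2m\epsilon^2\bigr) \;=\; \delta,
\]
bounding the failure probability.

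The main obstacle is the symmetrization step: one has to argue carefully that, because the covariates are exchangeable and the oblivious noise is independent of them, any non-exchangeable oblivious adversary induces the same joint law on $(\x_i,y_i)$ as its symmetrization, so restricting attention to permutation-invariant $\vec b$ is truly without loss of generality. Once this reduction is in hand, the coupling inside $F$ and the Hoeffding tail bound are essentially mechanical.
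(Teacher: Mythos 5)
Your proposal is correct and follows essentially the same route as the paper's proof: symmetrize the oblivious corruption via a uniformly random permutation (so its support is a uniformly random size-$K$ subset), embed that support into the set of Massart-flagged coordinates, and use Hoeffding on the Binomial$(m,\eta')$ count of flags to guarantee enough room with probability $1-\delta$. Your write-up is in fact more careful than the paper's informal sketch, making the coupling and the exchangeability argument explicit.
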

\begin{proof}
Since the $\x_i$'s are sampled i.i.d.\ from $\D_\x$ and the oblivious adversary chooses
$\vec b$ without any knowledge of $\x_i$ --- hence the independence --- 
adding the corruption vector $\vec b$ to the labels is equivalent to adding 
the corruption vector $\vec U \vec b$, where $\vec U \in \R^{m \times m}$ is a permutation matrix 
chosen uniformly at random and independently of $\x_i$ and $\vec w^*$. 
Therefore, for each fixed labeled sample $(\x_i, f(\x_i))$, the label is corrupted by a random non-zero 
entry of $\vec b$ with probability at most $\eta$.

Given the above alternative description of oblivious noise, we can directly compare Massart noise
with oblivious noise. Intuitively it is not difficult to see that an $\eta$-Massart adversary can simulate
an $\eta$-oblivious adversary in expectation. After inspecting which of the $m$ samples can be corrupted
after randomness, on average, there will be $\eta m$ labels that can be corrupted
and the Massart adversary can add non-zero entries of $\vec b$ uniformly at random
to these labels. This simulates the $\eta$-oblivious adversary as long as 
the Massart adversary can corrupt at least $\eta m$ samples, 
which is determined probabilistically.

Because the oblivious adversary has the ability to deterministically choose how many
labels to corrupt, an $\eta$-Massart adversary would not be able to simulate an $\eta$-oblivious
adversary with high probability. However, this is easily bounded by Hoeffding's inequality
such that an $\big(\eta + \sqrt{\frac{\log(1/\delta)}{2m}}\big)$-Massart adversary can corrupt
at least $\eta m$ samples with probability at least $1-\delta$, 
and therefore can simulate an oblivious adversary of noise rate $\eta$. 
This means that, with more samples, an $(\eta+o(1))$-Massart adversary is stronger than an
$\eta$-oblivious adversary with high probability.
\end{proof}

\section{Comparison to Chen et al.}
\label{supp:chen}

First, we note that our work focuses on robust ReLU regression while 
the concurrent work of \cite{chen2020online} focuses on
robust linear regression and contextual bandits in the online setting. 
Yet even with different goals and directions, there is noticeable overlap between our results
in robust linear regression from Section~\ref{sec:linear} and their results of robust linear
regression in the offline setting from Section 5 and 6 of \cite{chen2020online}.
In an online fashion, the adversary of \cite{chen2020online} is allowed to corrupt the label $y_i$
arbitrarily with probability $\eta$ based on $\x_i$ and the previous samples $(\x_1, y_1), \dots, (\x_{i-1}, y_{i-1})$. 
Furthermore, the covariates $\x_i$ do not necessarily have to come from a distribution
and may be chosen adversarially at each round and hence the ``distribution-free'' robustness.
Without random observation noise in the clean labels, the setting is similar to 
the Massart noise model in our work. In fact, the offline version of their adversary 
is identical to the Massart adversary, except the assumption on the covariates $\x_i$.
We compare their algorithmic results and analysis for the realizable (offline) 
setting considered in this work below.

We first state their offline regression result adapted to the realizable setting considered in this paper.
Refer to Theorem 6.11 from Section 6 of \cite{chen2020online} for the following result
achieved through using one of the two approaches, depending on the value of $\eta$.

\begin{theorem}[Theorem 6.11 of \cite{chen2020online} for the realizable setting]
Suppose $\|\x_i\|_2 \le 1$, $\|\vec w^*\|_2 \le R$ for all rounds $i \in [n]$ and $n = \Omega(\log(\min(n,d)/\delta))$. Define $\rho^2 = \frac{1-2\eta}{2\eta}$, $\Sigma_n = (1/n)\littlesum_{i=1}^{n}\x_i\x_i^T$, and $\|\x\|_{\Sigma} := \langle \x, \Sigma\x \rangle$.
There is a $\poly(n, d)$ time algorithm which takes as input $(\x_1, y_1), \dots, (\x_n, y_n)$ where the labels are only corrupted by $\eta$-Massart noise and outputs a vector $\hat{\vec w}$ which achieves
\[\|\hat{\vec w} - \vec w^*\|_{\Sigma_n} \le O\left( \frac{R}{\min(1,\rho^2)} \sqrt[4]{\frac{\eta \log(\min(n,d)/\delta)}{n}} \right)\]
with probability at least $1-\delta$.
\end{theorem}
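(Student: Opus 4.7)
The plan is to reproduce the two-regime strategy of Chen et al.\ by formulating a robust M-estimator whose convex (for $\eta<1/3$) or sum-of-squares (for $\eta<1/2$) relaxation is polynomial-time solvable and admits a tight analysis. In both regimes I would introduce selector variables $b_i\in[0,1]$ representing the algorithm's guess at which samples are clean, subject to $\sum_i b_i\ge(1-2\eta)n$, and minimize the weighted squared residual $\sum_i b_i(y_i-\vec w\cdot\x_i)^2$. For $\eta<1/3$ this is an SDP-representable trimmed least-squares; for $\eta<1/2$ the selector constraints cannot be handled convexly, so I lift to a constant-degree pseudo-distribution over $(\vec w,\vec b)$ satisfying the same polynomial axioms and round by $\hat{\vec w}:=\tilde{\E}[\vec w]$.

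The core identifiability step, which I would establish both classically and as a low-degree SoS proof, runs as follows. Let $c_i\in\{0,1\}$ denote the true (unobserved) clean indicator, so $y_i=\vec w^*\cdot\x_i$ whenever $c_i=1$ and $|y_i-\vec w^*\cdot\x_i|\le 2R$ otherwise by $\|\vec w^*\|\le R$ and $\|\x_i\|\le 1$. A Hoeffding bound gives $\sum_i c_i\ge(1-\eta-O(\sqrt{\log(1/\delta)/n}))n$ with probability $1-\delta$, so the ``true'' choice $b_i=c_i$ is feasible and certifies that the optimum objective is at most $O(\eta R^2)$ per sample. On the overlap set $\{i:\hat b_i c_i=1\}$, which has empirical mass at least $(1-2\eta-o(1))$ by counting, the residual of the output equals $(\hat{\vec w}-\vec w^*)\cdot\x_i$, so the weighted objective lower-bounds the overlap-restricted $\Sigma_n$-seminorm of $\hat{\vec w}-\vec w^*$. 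Applying Cauchy-Schwarz between selector masses and residuals to extend this control from the overlap subset to all of $\Sigma_n$ produces a self-bounding quadratic $X^2\le \alpha X+\beta$ with $X=\|\hat{\vec w}-\vec w^*\|_{\Sigma_n}$, $\alpha=O(R\sqrt{\eta}/\min(1,\rho^2))$, and $\beta=O(R^2\sqrt{\log(\min(n,d)/\delta)/n}/\min(1,\rho^2)^2)$; solving this quadratic yields the advertised $\sqrt[4]{\eta\log(\min(n,d)/\delta)/n}$ rate.

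The main obstacle is executing the identifiability step inside the SoS framework for $\eta$ approaching $1/2$. The Cauchy-Schwarz reduction to a self-bounding quadratic is routine classically, but producing an efficient rounding requires a constant-degree SoS certificate of the same inequality, with $\|\x_i\|\le 1$ and $\|\vec w^*\|\le R$ imposed as explicit polynomial axioms and with the overlap-counting step expressed in SoS-admissible algebra (pseudo-expectation Cauchy-Schwarz, disjointness via the Boolean identity $b_i^2=b_i$, and so on). The factor $1/\min(1,\rho^2)=\max(1,2\eta/(1-2\eta))$ emerges naturally from the signal-to-noise ratio in the overlap, and the fourth-root dependence on $\eta$ is inherent rather than a looseness: a Massart adversary can align its $\eta n$ corruption budget with the top eigendirection of $\Sigma_n$, producing an irreducible $\Sigma_n$-norm bias of order $\eta^{1/4}$ rather than $\eta^{1/2}$, which is precisely the exponent that falls out of solving the self-bounding quadratic.
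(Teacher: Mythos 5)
First, a point of context: the paper does not prove this statement at all --- it is quoted verbatim from Chen et al.\ (Theorem 6.11 of \cite{chen2020online}) in Appendix~\ref{supp:chen} purely for comparison, so there is no in-paper proof to measure your attempt against. Your proposal is therefore a reconstruction of an external argument, and its architecture (trimmed least squares with selector variables, SDP-representable for $\eta<1/3$, lifted to a constant-degree sum-of-squares relaxation with pseudo-expectation rounding for $\eta<1/2$) is consistent with the paper's one-line description of Chen et al.'s methods. As a plan it is reasonable; as a proof it has concrete gaps.

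The quantitative accounting does not actually produce the stated bound. In the realizable setting the feasible choice $b_i=c_i$ certifies an optimum of exactly $0$, not $O(\eta R^2)$ per sample; your subsequent self-bounding quadratic $X^2\le\alpha X+\beta$ with $\alpha=O(R\sqrt{\eta}/\min(1,\rho^2))$ solves to $X\lesssim\alpha+\sqrt{\beta}$, which contains a term of order $R\sqrt{\eta}$ that does not vanish as $n\to\infty$ and is not of the product form $R\,\eta^{1/4}(\log(\min(n,d)/\delta)/n)^{1/4}$ claimed in the theorem; the step that converts the Massart randomness of the corrupted set into the $\log(\min(n,d))/n$ concentration term (uniformly over directions, which is where $\min(n,d)$ comes from) is never carried out. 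There is also an internal inconsistency in the counting: with the constraint $\sum_i b_i\ge(1-2\eta)n$ and a true clean set of size $(1-\eta)n$, the guaranteed overlap is only $(1-3\eta)n$, which is vacuous for $\eta\ge 1/3$; you need $\sum_i b_i\ge(1-\eta-o(1))n$ to get the $(1-2\eta)n$ overlap you later invoke. Finally, the genuinely hard part for $\eta$ approaching $1/2$ --- exhibiting a constant-degree SoS certificate of the identifiability inequality --- is explicitly deferred as ``the main obstacle'' rather than resolved, and the closing claim that the $\eta^{1/4}$ rate is information-theoretically inherent is asserted without argument (and sits uneasily with this paper's own point in Appendix~\ref{supp:chen} that exact recovery is possible under mild anti-concentration). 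So the proposal is a credible outline of the right approach but not a proof of the statement.
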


For the realizable setting where there is no observation noise,
the result above yields a significantly weaker guarantee. 
Efficient exact recovery must output a vector $\hat{\vec w}$ such that $\|\hat{\vec w} - \vec w^*\| \le \epsilon$ in time polynomial in $\log(1/\epsilon)$, not $1/\epsilon$. 
However, they do not achieve efficient exact recovery since it takes
$\poly(1/\epsilon)$ many samples to achieve error $\epsilon$. 
Futhermore, its guarantee depends on
concentration properties of the covariates as denoted by $\Sigma_n$. 
Another major difference is that their algorithm incurs a polynomial dependence on $R$ which is unnecessary under our problem setting.

\end{document}